\newtheorem{theorem}{Theorem}[section]
\newtheorem{lemma}[theorem]{Lemma}
\theoremstyle{definition}
\newtheorem{definition}[theorem]{Definition} % definition numbers are dependent on theorem numbers
\newtheorem{example}[theorem]{Example} % same for example numbers
\newtheorem{proposition}[theorem]{Proposition} % same for example numbers
\theoremstyle{remark}
\newcommand{\Ar}{\textit{Ar}}
\newcommand{\att}{\rightarrow}
\newcommand{\AFam}{\textit{F}_{\textnormal{ASPIC}-}}
\newcommand{\AFdam}{\textit{J}_{\textnormal{DA}-}}
\newcommand{\CalF}{\mathcal{F}}
\renewcommand{\flat}{\textit{flat}}
\newcommand{\Conc}{\textit{Conc}}
\newcommand{\DefRules}{\textit{DefRules}}
\newcommand{\TopRule}{\textit{TopRule}}
\newcommand{\Sub}{\textit{Sub}}
\newcommand{\supp}{\textit{sup}}
	\newcommand*{\centerfloat}{
  	\parindent \z@
  	\leftskip \z@ \@plus 1fil \@minus \textwidth
  	\rightskip\leftskip
  	\parfillskip \z@skip}
\def\hb{\hbox to 10.7 cm{}}
\newcommand{\onlypaper}[1]{\iftoggle{onlypaper}{#1}{}}
\newcommand{\onlyreport}[1]{\iftoggle{onlyreport}{#1}{}}
\begin{document}

\sloppy

\pagestyle{headings}
\def\thepage{}

% \begin{frontmatter}              % The preamble begins here.

%\pretitle{Pretitle}
\title{Technical Report of \\ ``Deductive Joint Support\\ for Rational Unrestricted Rebuttal''}

% \markboth{}{January 2020\hb}
%\subtitle{Subtitle}

\author{Marcos Cramer, Meghna Bhadra} 
% and
% \author[A]{\fnms{Meghna} \snm{Bhadra}}

% \runningauthor{B.P. Manager et al.}
\date{International Center for Computational Logic, TU Dresden, Germany}

\maketitle

\begin{abstract}
In ASPIC-style structured argumentation an argument can rebut another argument by attacking its conclusion. Two ways of formalizing rebuttal have been proposed: In restricted rebuttal, the attacked conclusion must have been arrived at with a defeasible rule, whereas in unrestricted rebuttal, it may have been arrived at with a strict rule, as long as at least one of the antecedents of this strict rule was already defeasible. One systematic way of choosing between various possible definitions of a framework for structured argumentation is to study what rationality postulates are satisfied by which definition, for example whether the closure postulate holds, i.e.\ whether the accepted conclusions are closed under strict rules. While having some benefits, the proposal to use unrestricted rebuttal faces the problem that the closure postulate only holds for the grounded semantics but fails when other argumentation semantics are applied, whereas with restricted rebuttal the closure postulate always holds. In this paper we propose that ASPIC-style argumentation can benefit from keeping track not only of the attack relation between arguments, but also the relation of deductive joint support that holds between a set of arguments and an argument that was constructed from that set using a strict rule. By taking this deductive joint support relation into account while determining the extensions, the closure postulate holds with unrestricted rebuttal under all admissibility-based semantics. We define the semantics of deductive joint support through the flattening method.
\end{abstract}

% \begin{keyword}
% knowledge representation\sep structured argumentation\sep ASPIC\sep bipolar argumentation\sep rationality postulates\sep unrestricted rebuttal
% \end{keyword}
% \end{frontmatter}
% \markboth{January 2020\hb}{January 2020\hb}
%\thispagestyle{empty}
%\pagestyle{empty}

\section{Introduction}

Formal argumentation has become a fruitful field of research within AI~\cite{rahwan2009argumentation}. It comprises two main branches: Abstract argumentation is based on the idea promoted by Dung~\cite{dung1995acceptability} that under some conditions, the acceptance of arguments depends only on the {\em attack relation} between the arguments, i.e.\ on the relation that holds between a counterargument and the argument that it counters. This idea gives rise to the notion of an \emph{argumentation framework} (\emph{AF}), a directed graph whose nodes represent arguments and whose edges represent the attack relations between them, as well as to the notion of an \emph{argumentation semantics}, a way of choosing accepted arguments from an argumentation framework. Structured argumentation, on the other hand, studies the internal structure of arguments that are constructed in some logical language and specifies how this internal structure determines the attack relation between the arguments. Once the attack relation has been specified, the argumentation semantics from abstract argumentation can be applied to determine the acceptability of arguments.

One important approach within structured argumentation is that of ASPIC-style frameworks like ASPIC+~\cite{modgil2014aspic+} and ASPIC$-$~\cite{caminada2014preferences}, in which arguments are constructed by applying strict and defeasible rules to strict and defeasible premises. In these ASPIC-style frameworks one can distinguish various kinds of attacks depending on which part of an argument gets questioned. One kind of attack is a rebuttal, in which one argument attacks the conclusion of another argument. Two ways of formalizing rebuttal have been proposed: ASPIC+ makes use of \emph{restricted rebuttal}, in which the attacked conclusion must have been arrived at with a defeasible rule, whereas ASPIC$-$ makes use of \emph{unrestricted rebuttal}, in which the attacked conclusion may have been arrived at with a strict rule, as long as at least one of the antecedents of this strict rule was already defeasible. 

One systematic way of choosing between various possible definitions of a framework for structured argumentation is to study what rationality postulates are satisfied by which definition \cite{caminada2017rationality}. One example of such a rationality postulate is the closure postulate, according to which the accepted conclusions should be closed under strict rules, i.e.\ a statement that is derivable by applying a strict rule to some accepted conclusion should itself be an accepted conclusion. For all admissibility-based argumentation semantics, e.g.\ grounded, complete, stable or preferred semantics, ASPIC+ satisfies the closure postulate. ASPIC$-$ on the other hand only satisfies closure under the grounded semantics, but fails to do so for the others, such as the preferred semantics. This failure of the closure postulate is due to the use of unrestricted rebuttal. On the other hand, from the point of view of human argumentation, unrestricted rebuttal seems to be a very natural way of attacking an argument. This intuition has also been underpinned through an empirical study of human evaluation of arguments~\cite{yu2018structured}.

In this paper we propose a modification to ASPIC$-$, called \emph{Deductive ASPIC$-$}, that ensures that the closure postulate is satisfied under all admissibility-based argumentation semantics. The underlying idea is to keep track not only of the attack relation between arguments, but also the relation of deductive joint support that holds between a set of arguments and an argument that was constructed from that set using a strict rule. For this purpose, we introduce the notion of a \emph{Joint Support Bipolar Argumentation Framework} (\emph{JSBAF}), which contains an attack relation like usual AFs and additionally a joint support relation, whose intuitive interpretation is a deductive support from the supporting arguments towards the supported arguments due to the latter being constructed by applying a strict rule to the former. We show how existing argumentation semantics for AFs can be adapted to semantics for JSBAFs using the flattening method. We prove that the resulting framework for structured argumentation satisfies closure as well as two other important rationality postulates, direct consistency and indirect consistency. In this paper we limit ourselves to structured argumentation without preferences, leaving the generalization of the results to preference-based argumentation for future work.

The paper is structured as follows: Section~\ref{sec:prelim} contains required preliminaries from abstract and structured argumentation. In Section~\ref{sec:JSBAF} we define JSBAFs and show how existing argumentation semantics for AFs can be adapted to semantics for JSBAFs using the flattening method. In Section~\ref{sec:postulates} we define Deductive ASPIC$-$, prove that it satisfies the closure postulate and the two consistency postulates, and finally illustrate the functioning of Deductive ASPIC$-$ by adapting Caminada's tandem example~\cite{caminada2017rationality} to Deductive ASPIC$-$. Section~\ref{sec:conclusion} concludes the paper and presents avenues for further research.

\section{Preliminaries of Abstract and Structured Argumentation}
\label{sec:prelim}

This section briefly presents some required preliminaries of abstract and structured argumentation, starting with the notion of an \emph{argumentation framework} due to Dung~\cite{dung1995acceptability}.

\begin{definition}
 An \emph{argumentation framework (AF)} $F = (\Ar,\att)$ is a (finite or infinite) directed graph in which the set $\Ar$  of vertices is considered to represent arguments and the set ${\att} \subseteq \Ar \times \Ar$ of edges is considered to represent the attack relation between arguments, i.e.\ the relation between a counterargument and the argument that it counters.
\end{definition}

% \begin{definition}
%  An \emph{$\att$-path} is a sequence $\langle a_0, \dots, a_n \rangle$ of arguments where \linebreak $(a_i,a_{i+1}) \in \att$ for $0 \leq i < n$ and where $a_j \neq a_k$ for $0 \leq j < k \leq n$ with either $j \neq 0$ or $k \neq n$. An \emph{odd $\att$-cycle} is an $\att$-path $\langle a_0, \dots, a_n \rangle$ where $a_0 = a_n$ and $n$ is odd. 
% \end{definition}

Given an argumentation framework, we want to choose sets of arguments for which it is rational and coherent to accept them together. Such a set of arguments that may be accepted together is called an \emph{extension}. Multiple \emph{argumentation semantics} have been defined in the literature, i.e.\ multiple different ways of defining extensions given an argumentation framework. Before we consider specific argumentation semantics, we first give a formal definition of the notion of an \emph{argumentation semantics}:

\begin{definition}
 An \emph{argumentation semantics} is a function $\sigma$ that maps any AF ${F = (\Ar,\att)}$ to a set $\sigma(F) \subseteq 2^\Ar$. The elements of $\sigma(F)$ are called $\sigma$-extensions of~$F$.
\end{definition}

% \begin{note}
%  We usually define an argumentation semantics $\sigma$ by specifying criteria which a subset of $\Ar$ has to satisfy in order to be a $\sigma$-extension of $F$.
% \end{note}

In this paper we consider the complete, stable, grounded and preferred semantics:

\begin{definition}%[Abstract argumentation semantics]
Let $F = (\Ar,\att)$ be an AF, and let $S \subseteq \Ar$. The set $S$ is called \emph{conflict-free} iff there are no arguments $b,c \in S$ such that $b$ attacks $c$ (i.e. such that $(b,c) \in \att$). 
Argument $a \in \Ar$ is \emph{defended} by $S$ iff for every $b \in \Ar$ such that $b$ attacks $a$ there exists $c \in S$ such that $c$ attacks $b$. We say that $S$ is \emph{admissible} iff $S$ is conflict-free and every argument in $S$ is defended by $S$.
% We say that $S$ \emph{attacks} $a$ if there exists $b \in S$ such that $b$ attacks $a$.%, and we define $S^+ = \{a \in \Ar \mid S \mbox{ attacks } a \}$ and $S^- = \{a \in \Ar \mid a \mbox{ attacks some } b \in S \}$.
\vspace{-2.5mm}
\begin{itemize}
    \setlength{\itemsep}{1pt}%
    \setlength{\parskip}{0pt}%
\item $S$ is a \emph{complete extension} of $F$ iff $S$ is admissible and $S$ contains all the arguments it defends. 
\item $S$ is a \emph{stable extension} of $F$ iff $S$ is admissible and $S$ attacks all arguments in $\Ar \setminus S$.
\item $S$ is the \emph{grounded extension} of $F$ iff $S$ is the minimal (with respect to set inclusion) complete extension of $F$. 
\item $S$ is a \emph{preferred extension} of $F$ iff $S$ is a maximal (with respect to set inclusion) complete extension of $F$.
% \item $S$ is a semi-stable extension of $F$ iff it is a complete extension and %$S \cup S^+$ is maximal with respect to set inclusion among complete extensions, i.e.\ 
% there exists no complete extension $S_1$ such that $S \cup S^+ \subset S_1 \cup S_{1}^{+}$. 
% \item $S$ is a stage extension of $F$ iff $S$ is a conflict-free set and %$S \cup S^+$ is maximal with respect to set inclusion, i.e.\  $S$ is conflict-free, and 
% there exists no conflict-free set $S_1$ such that $S \cup S^+ \subset S_1 \cup S_{1}^{+}$. 
% \item $S$ is a naive extension of $F$ iff $S$ is a maximal conflict-free set. 
\end{itemize}
\end{definition}
All of these four semantics satisfy the property that every extension is an admissible set. Due to this property they are called \emph{admissibility-based semantics}.

We now turn towards the definition of ASPIC$-$, a framework for structured argumentation introduced by~\cite{caminada2014preferences}.
\begin{definition} %\textbf{Argumentation System}. 
  Given a logical language $L$ that is closed under negation $(\neg)$, an \emph{argumentation system} over $L$ is a tuple ${AS = (R_s,R_d,n)}$ where:
\vspace{-2.5mm}
	\begin{itemize}
    \setlength{\itemsep}{1pt}%
    \setlength{\parskip}{0pt}%
		\item $R_s$ is a finite set of strict inference rules of the form $\varphi_1,\ldots,\varphi_k \mapsto \varphi$ where $\varphi_i,\varphi$ are elements in $L$ and $k\geq 0$.
		\item $R_d$ is a finite set of defeasible inference rules of the form $\varphi_1,\ldots,\varphi_k \Mapsto \varphi$ where $\varphi_i,\varphi$ are elements in $L$ and $k\geq 0$.
		\item $n$ is a partial function such that $n: R_d \rightarrow L$.
	\end{itemize}
\end{definition}

\begin{definition}
%  Given two formulas $\varphi,\psi$, we write $\varphi = -\psi$ to mean that either $\varphi = \neg \psi$ or $\psi = \neg \varphi$.
 Let $\varphi$ and $\psi$ be formulas. $\varphi = -\psi$ means that $\varphi = \neg \psi$ or $\psi = \neg \varphi$.
\end{definition}

% \begin{note}
%  The list of antecedents of a strict or defeasible rule may be empty. In that case the rule is a (strict or defeasible) premise from which argument construction can start.
% \end{note}

\begin{definition} %\textbf{Strict and Defeasible Rules}. 
An \emph{argument $A$ on the basis of} an argumentation system ${AS = (R_s,R_d,n)}$ is defined recursively as follows:
\vspace{-2mm}
\begin{itemize}
    \setlength{\itemsep}{1pt}%
    \setlength{\parskip}{0pt}%
 \item $A_1,\ldots,A_n\mapsto\psi$ is an argument if $A_1,\ldots,A_k (k\geq0)$ are arguments, and there is a strict rule $r = \Conc(A_1),\ldots,\Conc(A_n)\mapsto\psi$ in $R_s$. In that case $\DefRules(A) \coloneqq \DefRules(A_1)\cup\ldots\cup \DefRules(A_k)$.
 \item $A_1,\ldots,A_n\Mapsto\psi$ is an argument if $A_1,\ldots,A_k (k\geq0)$ are arguments, and there is a defeasible rule $r = \Conc(A_1),\ldots,\Conc(A_n)\Mapsto\psi$ in $R_d$. In that case $\DefRules(A) \coloneqq \DefRules(A_1)\cup\ldots\cup \DefRules(A_k)\cup\{r\}$.
\end{itemize}
\vspace{-2mm}
In both cases we define $\Conc(A) \coloneqq \psi$, $\Sub(A) \coloneqq \Sub(A_1)\cup\ldots\cup \Sub(A_k)\cup\{A\}$ and $\TopRule(A) \coloneqq r$. Furthermore, we call an argument $A$ \emph{defeasible} iff $\DefRules(A) \neq \emptyset$.
% \vspace{-2mm}
% 	\begin{itemize}%
%     \setlength{\itemsep}{1pt}%
%     \setlength{\parskip}{0pt}%
% 		\item $\Conc(A) = \psi$, 
% 		\item $\Sub(A) = \Sub(A_1)\cup\ldots\cup \Sub(A_n)\cup\{A\}$,
% % 		\item $\DefRules(A) = \DefRules(A_1)\cup\ldots\cup \DefRules(A_n)$,
% 		\item $\TopRule(A) = \Conc(A_1),\ldots,\Conc(A_n)\mapsto\psi$.
% 	\end{itemize}
% 	\noindent 2. $A_1,\ldots,A_n\Mapsto\psi$ is an argument if $A_1,\ldots,A_n (n\geq0)$ are arguments, and there is a defeasible rule $\Conc(A_1),\ldots,\Conc(A_n)\Mapsto\psi$ in $R_d$. In that case we define:
% 	\begin{itemize}
% 		\item $\Conc(A) = \psi$, 
% 		\item $\Sub(A) = \Sub(A_1)\cup\ldots\cup \Sub(A_n)\cup\{A\}$,
% 		\item $\DefRules(A) = \DefRules(A_1)\cup\ldots\cup \DefRules(A_n)\cup\{\Conc(A_1),\ldots,\Conc(A_n)\Mapsto\psi\}$,
% 		\item $\TopRule(A) = \Conc(A_1),\ldots,\Conc(A_n)\Mapsto\psi$.
% 	\end{itemize}
% 	Furthermore, for any argument $A$ and a set of arguments $E$: $A$ is strict iff $\DefRules(A) = \emptyset$, defeasible iff $\DefRules(A)\neq\emptyset$.
\end{definition}

\begin{definition}
%  Let $Ar'$ be the set of all strict arguments on the basis of $AS= (R_s,R_d,n)$. Then $AS$ is said to be \emph{consistent} iff there are no $A,B \in Ar'$ such that $\Conc(A)= -\Conc(B)$.
An argumentation system $AS$ is called \emph{consistent} iff there are no strict arguments $A,B$ on the basis of $AS$ such that $\Conc(A)= -\Conc(B)$.
\end{definition}

\begin{definition}%\textbf{Undercut and Unrestrictive Rebut}. 
Let $A$ and $B$ be arguments on the basis of an argumentation system ${AS = (R_s,R_d,n)}$. We say that
% 	\begin{itemize}
		$A$ \emph{undercuts} $B$ (on $B^\prime$) iff $\Conc(A) = -n(r)$ for some $B^\prime \in \Sub(B)$ with $\TopRule(B^\prime) = r$ and $r \in R_d$.
		We say that $A$ \emph{unrestrictedly rebuts} $B$ (on $B^\prime$) iff $\Conc(A) = -\Conc(B^\prime)$ for some defeasible $B^\prime\in \Sub(B)$.
% 	\end{itemize}
\end{definition}

\begin{definition}
 Let ${AS = (R_s,R_d,n)}$ be an argumentation system. The \emph{argumentation framework corresponding to $AS$ according to ASPIC$-$}, denoted by $\AFam(AS)$, is the AF $(\Ar,\att)$, where $\Ar$ is the set of arguments on the basis of $AS$ and $A \att B$ holds whenever $A$ undercuts or unrestrictedly rebuts $B$. 
\end{definition}

% \begin{definition}
%  Let ${AS = (R_s,R_d,n)}$ be an argumentation system and let $\sigma$ be an argumentation semantics. According to ASPIC$-$, a formula $\phi$ is a \emph{justified conclusion yielded by $AS$ under $\sigma$} iff for every extension $E \in \sigma(\AFam(AS))$, there is an argument $A \in E$ with $\phi = \Conc(A)$. The set of all justified conclusions yielded by $AS$ under $\sigma$ according to ASPIC$-$ is denoted by $J\Concs(AS,\sigma,\textnormal{ASPIC}-)$.
% \end{definition}

\begin{definition}
 Let ${AS = (R_s,R_d,n)}$ be an argumentation system and let $\sigma$ be an argumentation semantics. For every $\sigma$-extension $E$ of $\AFam(AS)$, the set $\{\Conc(A) \mid A \in E\}$ is called a \emph{set of ASPIC$-$ conclusions of $AS$ under $\sigma$}.
\end{definition}

This concludes the definition of the structured argumentation framework ASPIC$-$. Finally we define three rationality postulates due to Caminada and Amgoud~\cite{caminada2007evaluation} that structured argumentation frameworks should ideally satisfy. %For this we first need an auxiliary definition.

\begin{definition}
 Let $L$ be a logical language, let ${AS = (R_s,R_d,n)}$ be an argumentation system over $L$, and let $S \subseteq L$. The \emph{closure of $S$ under strict rules}, denoted $Cl_{R_s}(S)$, is the smallest set such that $Cl_{R_s}(S) \supseteq S$ and for every strict rule $a_1,\ldots,a_n \mapsto b \in R_s$ such that $a_1,\ldots,a_n \in Cl_{R_s}(S)$, we have $b \in Cl_{R_s}(S)$.
\end{definition}

\begin{definition}
	Let $\CalF$ be a framework for structured argumentation (e.g.\ ASPIC$-$) and let $\sigma$ be an argumentation semantics.
\vspace{-2mm}
	\begin{itemize}
	 \item $\CalF$ satisfies \emph{closure} under $\sigma$ iff for every consistent argumentation system $AS$ and for every set $C$ of $\CalF$-conclusions of $AS$ under $\sigma$, we have $Cl_{R_s}(C) = C$. % $Cl_{R_s}(JConcs(AS,\sigma,\CalF)) = JConcs(AS,\sigma,\CalF)$.
	 \item $\CalF$ satisfies \emph{direct consistency} under $\sigma$ iff for every consistent argumentation system $AS$ and every set $C$ of $\CalF$-conclusions of $AS$ under $\sigma$, there is no $\phi$ such that $\phi,\neg\phi \in C$.
	 \item $\CalF$ satisfies \emph{indirect consistency} under $\sigma$ iff for every consistent argumentation system $AS$ and every set $C$ of $\CalF$-conclusions of $AS$ under $\sigma$, there is no $\phi$ such that $\phi,\neg\phi \in Cl_{R_s}(C)$.
% 	 \item $\CalF$ satisfies \emph{indirect consistency} under $\sigma$ iff for every argumentation system $AS$, there is no formula $\phi$ such that $\phi,\neg\phi \in Cl_{R_s}(JConcs(AS,\sigma,\CalF))$.
	\end{itemize}
\end{definition}

Caminada et al.~\cite{caminada2014preferences} showed that ASPIC$-$ satisfies these three postulates under the grounded semantics, whereas Caminada and Wu~\cite{caminada2011limitations} showed that closure and indirect consistency are violated by ASPIC$-$ under the preferred semantics.

\section{Abstract Argumentation with Deductive Joint Support}
\label{sec:JSBAF}

Multiple ways of augmenting argumentation frameworks with a support relation between arguments have been considered in the literature \cite{boella2010support,nouioua2011argumentation,oren2008semantics}, giving rise to a lively research field called \emph{bipolar argumentation}. One of the ways of formally interpreting the support relation is called \emph{deductive support} \cite{boella2010support}. Here the idea is that when an argument $a$ deductively supports an argument $b$, this is similar to the situation when a formula $\phi$ logically entails a formula $\psi$, i.e.\ if one accepts $a$, one should also accept $b$. 

In this section we extend the deductive support relation to a \emph{deductive joint support} relation, in which multiple arguments together can deductively support another argument. Again the intuitive idea is similar to when multiple formulas entail another formula: When $a_1, \dots, a_n$ jointly deductively support $b$, this means that if $a_1, \dots, a_n$ are all accepted, $b$ should also be accepted. In this section we introduce the notion of a \emph{Joint Support Bipolar Argumentation Framework} (JSBAF) in which such joint support relations appear alongside the usual attack relation. We then show how the flattening methodology (see \cite{boella2009meta}) can be used to adapt standard argumentation semantics to semantics for JSBAFs that give a deductive interpretation to the joint support relation.

\begin{definition} %\textbf{Joint Support Bipolar Argumentation Framework or JSBAF}. 
A \emph{Joint Support Bipolar Argumentation Framework} (JSBAF) is a triple ${(Ar,\rightarrow ,\Rightarrow)}$ such that $Ar$ is the set of all arguments in the framework, ${\rightarrow\subseteq Ar\times Ar}$ is an attack relation and ${\Rightarrow\subseteq2^{Ar}\times Ar}$ is a joint support relation.
% 	\textit{By a support relation ${X\Rightarrow a}$ (where ${X\subseteq Ar}$ and ${a\in Ar}$) being deductive we mean that $X$ logically entails ${a}$. Hence, when all arguments in X are accepted ${a}$ should also be accepted.} \\\\
% 	\textit{Note: For simplicity of notation from here onwards we will identify the singleton set $\{x\}$ with $x$}.
	\label{def:jsbaf}
\end{definition}

\begin{example} \label{fig:jsbafframework}
	As an example, we illustrate below a JSBAF $J_1$ in which arguments $a$ and $b$ jointly support $c$ which is attacked by argument $d$:
	\vspace{-3mm}
	\begin{figure}[H]
		\begin{center}
		\scalebox{0.90}[0.90]{
			\begin{tikzpicture}[->,>=stealth,shorten >=1pt,node distance=1.5cm,thick,main node/.style={circle,draw,font=\small}]
				\node[main node] (1) at (0,0) {$c$};
				\coordinate (5) at (-1.5,0) {};
				\node[main node] (2) at (-2.5,0.5){$a$};
				\node[main node] (3) at (-2.5,-0.5) {$b$};
				\node[main node] (4) at (1.5,0) {$d$};
				\draw[double distance=1pt,-,>=latex] (2) to (5);
				\draw[double distance=1pt,-,>=latex] (3) to (5);
				\draw[double distance=1pt,->,>=latex] (5) to (1);
				\draw[->,>=latex] (4) to (1);
			\end{tikzpicture}
		}
		\end{center}
% 		\vspace{-1mm}
% 	\caption{JSBAF where arguments $a$ and $b$ jointly support $c$ which is attacked by argument $d$}
	\end{figure}
	\vspace{-8mm}
\end{example}

Now that we have formally defined a JSBAF, let us move on to its semantics. The principle idea is the same as in argumentation frameworks.

\begin{definition} 
  %\textbf{JSBAF Semantics}. 
  A JSBAF Semantics is a function that maps every JSBAF $J = (Ar,\rightarrow ,\Rightarrow)$ to a set $\sigma(J) \subseteq {2^{Ar}}$. The elements of $\sigma(J)$ are called $\sigma$-extensions. 
\end{definition}

The deductive property of the joint support relation inspires the following notion of a deductive JSBAF semantics:

\begin{definition} 
  %\textbf{Deductive JSBAF Semantics}. 
  A JSBAF semantics $\sigma$ is called \emph{deductive} iff for every JSBAF $J = {(Ar,\rightarrow ,\Rightarrow)}$, for every $\sigma$-extension $E$ of $J$, every set ${S\subseteq E}$ and every ${A\in Ar}$ such that ${S\Rightarrow A}$, we have ${A\in E}$.	
\end{definition}	

Furthermore, we will require the notion of a conflict-free JSBAF semantics:

\begin{definition} 
  A JSBAF semantics $\sigma$ is called \emph{conflict-free} iff for every JSBAF $J = {(Ar,\rightarrow ,\Rightarrow)}$, for every $\sigma$-extension $E$ and any $a,b \in E$, $(a,b) \notin \rightarrow$.
\end{definition}	

We will flatten JSBAFs to standard AFs in a two-step process. In the first step, we flatten JSBAFs to \emph{higher-level argumentation frameworks} (originally introduced by Gabbay~\cite{gabbay2009fibring}) that contain joint attacks.

% Next we introduce the idea of joint attacks, and formally define a so-called Higher Level Argumentation Framework. This notion shall be used and elaborated on a little later into the text.

\begin{definition} 
  \label{def:jointattack} 
%   \textbf{Joint Attack Relation}. 
  We define a \textit{higher-level argumentation framework} (\emph{higher level AF}) as a tuple $(Ar,\rightarrow)$ where $Ar$ is the set of arguments in the framework and $\rightarrow \subseteq (2^{Ar}\setminus \{\emptyset\}) \times Ar$ is a joint attack relation.
\end{definition}

% In other words, a Joint Attack is an attack from a group of arguments to a single argument.

\begin{example} \label{fig:jattack}
	The following is an example of a higher-level AF, where arguments $a$ and $b$ jointly attack $c$:
		\vspace{-7mm}
	\begin{figure}[H]
		\begin{center}
		\scalebox{0.90}[0.90]{
			\begin{tikzpicture}[->,>=stealth,shorten >=1pt,auto,node distance=1.5cm,thick,main node/.style={circle,draw,font=\small}]
				\node[main node] (1) at (0,0) {$c$};
				\coordinate (2) at (-1,0) {};
				\coordinate (3) at (-0.98,0) {};
				\node[main node] (4) at (-2,0.5){$b$};
				\node[main node] (5) at (-2,-0.5) {$a$};
				\draw[->,>=latex] (2) to (1);
				\draw[-,>=latex] (4) to (3);
				\draw[-,>=latex] (5) to (3);
			\end{tikzpicture}
		}
		\end{center}
		\vspace{-6mm}
% 		\caption{Arguments $a$ and $b$ jointly attack $c$}
	\end{figure}
\end{example}

We will now define the two-step process of flattening a JSBAF into a standard AF. The first step involves transforming the JSBAF to a higher-level AF by converting joint supports to joint attacks while introducing some meta-arguments. The second step involves transforming the higher-level AF to a standard AF by converting joint attacks to regular one-on-one attacks.

\begin{definition} %\textbf{Flattening Joint (also including single) Support}. 
Let  $J = {(Ar,\rightarrow ,\Rightarrow)}$ be a JSBAF. The one-step flattening of $J$, denoted by $\flat_1(J)$, is a higher-level AF ${(MS,\rightarrow_1)}$, where $MS \coloneqq \Ar \cup {\{\bar{b} \mid (X,b)\in\Rightarrow\}}$ and the joint attack relation $\rightarrow_1$ is defined as follows:
\vspace{-2mm}
	\begin{itemize}
			\item For each $(a,b) \in \rightarrow$, we have $(a,b) \in \rightarrow_1$.
			\item For each $(X,b)\in\Rightarrow$, we have $b \rightarrow_1 \bar{b}$.
			\item For each $X$ with ${(X,b)\in\Rightarrow}$ and for every ${a\in X}$, we have ${(X\setminus \{a\}) \cup \{\bar{b}\} \rightarrow_1 a}$.
	\end{itemize}
\end{definition}

In what follows, we present three examples of this flattening: %The first figure in each example represents a JSBAF and the second is its flattened Higher Order Framework. 
\begin{example} \label{ex:singlesup}
On the left, Figure~\ref{fig:flatjsupwithonearg} depicts a JSBAF $J_2$ consisting of an argument $a$ supporting another argument $b$. On the right, Figure~\ref{fig:flatjsupwithonearg} depicts its one-step flattening $\flat_1(J_2)$.
\vspace{-3mm}
	\begin{figure}[h] 
		\captionsetup[subfigure]{justification=justified,singlelinecheck=false}
		\centering 
  		\begin{subfigure}[b]{0.4\linewidth}
  		\scalebox{0.90}[0.90]{
   		\begin{tikzpicture}[->,>=stealth,shorten >=1pt,node distance=1.5cm,double distance=4pt,thick,main node/.style={circle,draw,font=\small}]
			\node[main node] (1) at (4,0) {$b$};
			\node[main node] (2) at (1,0){$a$};
			\draw[double distance=1pt,->,>=latex] (2) to (1);
		\end{tikzpicture} 
		}
%     		\subcaption{JSBAF $J_2$} \label{fig:singlesup}  
  		\end{subfigure}
  		\hspace{17mm}
		\begin{subfigure}[b]{0.4\linewidth}
		\scalebox{0.90}[0.90]{
 		\begin{tikzpicture}[->,>=stealth,shorten >=1pt,auto,node distance=1.5cm,thick,main node/.style={circle,draw,font=\small}]
			\node[main node] (2) at (0,0) {$\bar{b}$};
			\node[main node] (1) at (2,0){$b$};
			\node[main node] (3) at (-2,0) {$a$};
			\draw[->,>=latex] (1) to (2);
			\draw[->,>=latex] (2) to (3);
		\end{tikzpicture}
		}
% 		\subcaption{$\flat_1(J_2)$} \label{fig:flatsinglesup} 
		\end{subfigure}
		\vspace{-2mm}
		\caption{JSBAF $J_2$ and its one-step flattening $\flat_1(J_2)$} \label{fig:flatjsupwithonearg} 
	\end{figure}
	\vspace{-3mm}
\end{example}

\begin{example} \label{ex:jsupwithtwoargs}
	As a second example we reconsider the JSBAF $J_1$ from Example \ref{fig:jsbafframework}. %Figure~\ref{fig:flatjsupwithtwoargs} depicts 
	The following is its one-step flattening $\flat_1(J_1)$:
	\vspace{-3mm}
	\begin{figure}[H]
		\begin{center}
			\scalebox{0.8}[0.8]{
			\begin{tikzpicture}[->,>=stealth,shorten >=1pt,auto,node distance=1.5cm,thick,main node/.style={circle,draw,font=\small}]
				\node[main node] (1) at (-2.5,0.6){$a$};
				\node[main node] (2) at (-2.5,-0.6) {$b$};
				\node[main node] (3) at (-0.5,0) {$\bar{c}$};
				\node[main node] (4) at (1,0) {$c$};
				\coordinate (5) at (-1.5,0) {};
				\coordinate (6) at (-1.47,0) {};
				\coordinate (7) at (-2.5,-1.4) {};
				\coordinate (8) at (-2.5,-1.4) {};
				\node[main node] (9) at (2.5,0) {$d$};
				\draw[-,>=latex] (3) to (5); 
				\draw[-,>=latex] (5) to[out=-90,in=0] (2); 
				\draw[->,>=latex] (6) to (1);
				\draw[->,>=latex] (8) to (2);
				\draw[-,>=latex] (8) to[out=-30,in=-90] (3);
				\draw[-,>=latex] (8) to[out=-150,in=180] (1);
				\draw[->,>=latex] (4) to (3);
				\draw[->,>=latex] (9) to (4);						
			\end{tikzpicture} 
			}
		\end{center}
		\vspace{-6mm}
% 		\caption{One-step flattening $\flat_1(J_1)$ of the JSBAF $J_1$ from Example \ref{fig:jsbafframework}}  %\{$a$, $\bar{c}$\} jointly attack $b$ and  \{$b$, $\bar{c}$\} jointly attack $a$}
		\label{fig:flatjsupwithtwoargs}
	\end{figure}
\end{example}

\begin{example} \label{fig:jsupwiththreeargs}
	As a last example, we illustrate a JSBAF $J_3$ where arguments \{$a$, $b$, $c$\} jointly support $d$ (on the left) as well as its one-step flattening $\flat(J_3)$ (on the right):	
	\vspace{-8mm}
	\begin{figure}[H] 
		\centering
  		\begin{subfigure}[b]{0.4\linewidth}
  			\captionsetup{justification=justified,singlelinecheck=false,skip=25pt}
  			\scalebox{0.95}[0.95]{
   			\begin{tikzpicture}[->,>=stealth,shorten >=1pt,node distance=1.5cm,thick,main node/.style={circle,draw,font=\small}]
				\node[main node] (1) at (0,0) {$d$};
				\coordinate (2) at (-1,0) {};
				\node[main node] (3) at (-2.5,0){$b$};
				\node[main node] (4) at (-2.5,1){$a$};
				\node[main node] (5) at (-2.5,-1) {$c$};
				\coordinate (6) at (-0.98,0) {};
				\draw[double distance=1pt,->,>=latex] (2) to (1);
				\draw[double distance=1pt,-,>=latex] (3) to (6);
				\draw[double distance=1pt,-,>=latex] (4) to (6);
				\draw[double distance=1pt,-,>=latex] (5) to (6);
			\end{tikzpicture}
			}
% 		\subcaption{JSBAF $J_3$} \label{fig:threesup}
		%\vspace{4\baselineskip}
  		\end{subfigure}
  		\hspace{9mm}
		\begin{subfigure}[b]{0.4\linewidth}
		\captionsetup{justification=justified,singlelinecheck=false}
 		\scalebox{0.75}[0.75]{
			\begin{tikzpicture}[->,>=stealth,shorten >=1pt,node distance=1.5cm,thick,main node/.style={circle,draw,font=\small}]
			\node[main node] (1) at (-2.5,1.5) {$a$};
			\node[main node] (2) at (-2.5,0) {$b$};
			\node[main node] (3) at (-2.5,-1.5) {$c$};
			\node[main node] (4) at (0,0) {$\bar{d}$};
			\node[main node] (5) at (1.5,0) {$d$};
			\coordinate (6) at (-1.7,0) {};
			\coordinate (7) at (-3.5,1.5) {};
			\coordinate (9) at (-3.49,1) {};
			\coordinate (8) at (-3.5,-1.5) {};
			\coordinate (10) at (-3.5,-1.05) {};
			\draw[->,>=latex] (5) to (4);
			\draw[-,>=latex] (6) to (4);
			\draw[->,>=latex] (6) to (2);
			\draw[-,>=latex] (3) to[out=0,in=-3] (6);
			\draw[-,>=latex] (1) to[out=0,in=3]  (6);
			\draw[->,>=latex] (7) to (1);
			\draw[-,>=latex] (7) to[out=-120,in=180] (2);
			\draw[-,>=latex] (7) to[out=-150,in=150] (3);
			\draw[-,>=latex] (7) to[out=120,in=90] (4);
			\draw[-,>=latex] (8) to[out=150,in=-150] (1);
			\draw[-,>=latex] (8) to[out=120,in=-180] (2);
			\draw[->,>=latex] (8) to (3);
			\draw[-,>=latex] (8) to[out=-120,in=-90] (4);
			\end{tikzpicture} 
			}
  		\vspace{-11mm}
% 		\subcaption{$\flat_1(J_3)$}
		\label{fig:flatthreesup}
		\end{subfigure}
% 		\caption{JSBAF $J_3$ and its one-step flattening $\flat_1(J_3)$} \label{fig:flatjsupwiththreeargs}
	\end{figure}
\end{example}

In the next step, we define how a higher-level AF can be flattened to a standard AF. This flattening is originally due to Gabbay~\cite{gabbay2009fibring}.
\begin{definition} %\textbf{Flattening Joint and Single Attacks}- 
Let $H = (Ar,\rightarrow)$ be a higher-level AF. The flattening of $H$, denoted by $\flat_2(H)$, is a standard argumentation framework ${(MS,\rightarrow_2)}$, where the set $MS$ of meta-arguments and the attack relation $\rightarrow_2$ are defined as follows:
\vspace{-1mm}
\begin{itemize}
 \item $MA = \Ar \cup \{\bar{a}\mid$ there exists an attack $(X,b)\in\rightarrow$ with $a\in X$ and $\vert{X}\vert>1\}\cup\{e(X)\mid$ there exists an attack $(X,b)\in\rightarrow$ with $\vert{X}\vert>1\}$.
 \item For all $(\{a\},b)\in\rightarrow$, we have $a \rightarrow_2 b$.
 \item For all $(X,b)\in\rightarrow$ where $\vert{X}\vert>1$, we have $e(X) \rightarrow_2 b$, and for every $a\in X$, we have $a\rightarrow_2 \bar{a}$ and $\bar{a}\rightarrow_2 e(X)$.
\end{itemize}
\end{definition}

\begin{example}
 The following figure depicts the flattening $\flat_2(\flat_1(J_1))$ of the higher-level AF $\flat_1(J_1)$ depicted in Example~\ref{ex:jsupwithtwoargs}. This example also illustrates how to combine the flattenings $\flat_1$ and $\flat_2$ in order to flatten a JSBAF a standard AF in two flattening steps.
% to an AF in two steps.
	\vspace{-3mm}
\begin{figure}[H]
	\begin{center}
	\scalebox{0.8}[0.8]{
		\begin{tikzpicture}[->,>=stealth,shorten >=1pt,auto,node distance=1.5cm,thick,main node/.style={circle,draw,font=\small}]
			\node[main node] (1) at (-4.5,0.7){$\bar{a}$};
			\node[main node] (2) at (-3,0.7) {$a$};
			\node[main node] (3) at (-3,-0.7) {$b$};
			\node[main node] (4) at (-4.5,-0.7) {$\bar{b}$};
			\node[main node] (5) at (0,0) {$\bar{\bar{c}}$};
			\node[main node] (6) at (1.5,0) {$\bar{c}$};
			\node[main node] (7) at (3,0) {$c$};
			\node[main node] (8) at (4.5,0) {$d$};

			\node[main node] (9) at (-1.5,0) {$e(b,\bar{c})$};
			\node[main node] (10) at (-4.5,-1.9) {$e(a,\bar{c})$};

			\draw[->,>=latex] (8) to (7); 
			\draw[->,>=latex] (7) to (6); 
			\draw[->,>=latex] (6) to (5);

			\draw[->,>=latex] (5) to (9);
			\draw[->,>=latex] (4) to[out=45,in=180] (9);
			\draw[->,>=latex] (9) to (2);

			\draw[->,>=latex] (2) to (1);
			\draw[->,>=latex] (1) to[out=180,in=180] (10);
			\draw[->,>=latex] (5) to[out=-90,in=0] (10);	
			\draw[->,>=latex] (3) to (4);
			\draw[->,>=latex] (10) to (3);
		\end{tikzpicture} 
		}
	\end{center}
	\vspace{-5mm}
	\caption{The flattening $\flat_2(\flat_1(J_1))$ of the higher-level AF $\flat_1(J_1)$ depicted in Example~\ref{ex:jsupwithtwoargs}}
	\label{fig:jattack}
\end{figure}
\vspace{-2mm}
\end{example}
During the two-step process of flattening a JSBAF framework to an abstract argumentation framework, meta-arguments like the $\bar{c}$ and the $\bar{\bar{c}}$ were introduced. As shown in Figure~\ref{fig:jattack}, $\bar{\bar{c}}$ is attacked only by $\bar{c}$ which is in turn attacked only by $c$. So intuitively, if $c$ is accepted $\bar{\bar{c}}$ should also be accepted. Similarly if argument $c$ is rejected, $\bar{\bar{c}}$ should also be rejected. As a result both $\bar{c}$ and $\bar{\bar{c}}$ can be omitted, thus simplifying the flattened framework. This inspires the following definition for the simplified flattening of a JSBAF to an AF:

\begin{definition} 	\label{def:flatjsbaffw}
%\textbf{Flattening a JSBAF framework}. 
Let ${J=(Ar,\rightarrow,\Rightarrow)}$ be a JSBAF. We write $(MA',\rightarrow')$ for $\flat_2(\flat_1(J))$. Then the simplified flattening of $J$, denoted by $\flat(J)$, is a standard argumentation framework ${(MA^*,\rightarrow^*)}$ defined as follows: 
\vspace{-1mm}
\begin{itemize}
 \item $MA^* \coloneqq MA' \setminus \{\bar a, \bar {\bar a} \mid \textnormal{there is some } (S,a) \in \Rightarrow \textnormal{ with } |S| > 1 \}$
 \item $\rightarrow^* \coloneqq \{ (a,b) \in MA^* \times MA^* \mid a \rightarrow' b \} \cup \{ (a,b) \in MA^* \times MA^* \mid \bar {\bar a} \rightarrow' b \}$
\end{itemize}
% \begin{itemize}
%  \item ${MA\coloneqq Ar\cup\{\bar{a}\mid}$ there is an $S$ with ${(S,d)\in\Rightarrow}$ and ${a\in S\}\cup\{e(\{d\}\cup(S\setminus \{a\}))\mid}$ there is an $S$ with ${(S,d)\in\Rightarrow}$ and ${a\in S \}}$
% 	\item For all ${(x,y)\in\rightarrow}$, we have ${(x,y)\in\rightarrow'}$.
% 	\item For each ${(S,d)\in\Rightarrow}$: For each element ${a\in S}$ we have:
% 	\begin{itemize}
% 		\item ${a\rightarrow' \bar{a}}$
% 		\item For each element ${b\in (S\setminus \{a\})}$: ${\bar{b}\rightarrow' e(\{d\}\cup(S\setminus \{a\}))}$
% 		\item ${ e(\{d\}\cup(S\setminus \{a\}))\rightarrow' a}$
% 		\item ${d\rightarrow' e(\{d\}\cup(S\setminus \{a\}))}$
% 	\end{itemize} 
% \end{itemize}
\end{definition}

\begin{example}
The following figure depicts the simplified flattening $\flat(J_1)$ of the JSBAF $J_1$ from Example~\ref{fig:jsbafframework}:
	\vspace{-1mm}
\begin{figure}[H]
	\begin{center}
		\scalebox{0.80}[0.80]{
		\begin{tikzpicture}[->,>=stealth,shorten >=1pt,auto,node distance=1.5cm,thick,main node/.style={circle,draw,font=\small}]
			\node[main node] (1) at (-4.5,0.7){$\bar{a}$};
			\node[main node] (2) at (-3,0.7) {$a$};
			\node[main node] (3) at (-3,-0.7) {$b$};
			\node[main node] (4) at (-4.5,-0.7) {$\bar{b}$};
			\node[main node] (5) at (0.5,0) {$c$};
			\node[main node] (6) at (2.5,0) {$d$};
			\node[main node] (9) at (-1.5,0) {$e(b,c)$};
			\node[main node] (10) at (-4.5,-1.9) {$e(a,c)$};
			\draw[->,>=latex] (6) to (5);
			\draw[->,>=latex] (5) to (9);
			\draw[->,>=latex] (4) to[out=45,in=180] (9);
			\draw[->,>=latex] (9) to (2);
			\draw[->,>=latex] (2) to (1);
			\draw[->,>=latex] (1) to[out=180,in=180] (10);
			\draw[->,>=latex] (5) to[out=-90,in=0] (10);	
			\draw[->,>=latex] (3) to (4);
			\draw[->,>=latex] (10) to (3);	
		\end{tikzpicture}
		}
	\end{center}
	\vspace{-6mm}
% 	\caption{Flattening and Simplifying the JSBAF from Example \ref{fig:jsbafframework} to an Abstract Argumentation Framework in a single step}
% 	\label{fig:flatjsbaffremawork}
\end{figure} 
\end{example}

The flattening function $\flat$ allows us to define semantics of JSBAFs by first applying the flattening function and then applying a standard argumentation semantics:
% As mentioned earlier, a Structured Argumentation Framework with unrestricted rebuts can be expressed as a JSBAF with joint support relations. This JSBAF is then flattened into a regular Abstract Framework ${\flat(Ar,\rightarrow_1,\Rightarrow)}$. An admissibility-based Dung's semantics $\sigma$ is applied to the framework to compute a Dung's extension or $\sigma$-extension, $E^\prime$. We have already introduced the notion of a JSBAF semantics earlier in the text. Now in the following, we define the relationship between $E^\prime$ and a JSBAF-extension, also known as the ${\supp(\sigma)}$-extension, $E$.
\begin{definition}
\label{def:supsem}
	%\textbf{$\supp(\sigma)$-extension}. 
	For any Dung semantics $\sigma$, we define a JSBAF semantics $\supp(\sigma)$ as follows: $E$ is a ${\supp(\sigma)}$-extension of ${(Ar,\rightarrow,\Rightarrow)}$ iff there is an extension $E^\prime$ of ${\flat(Ar,\rightarrow,\Rightarrow)}$ such that ${E=E^\prime\cap Ar}$.
\end{definition}

The following two lemmas establish useful facts about $\supp(\sigma)$:
\begin{lemma}
	For any admissibility-based semantics {$\sigma$} of Dung's abstract frameworks, {$\supp(\sigma)$} is a deductive JSBAF semantics.
	\label{lem:dungjsbafsem}
\end{lemma}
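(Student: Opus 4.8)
The plan is to unfold the definition of $\supp(\sigma)$ and then show that mere admissibility of the flattened extension already forces the deductive closure — this is exactly what the construction of $\flat$ is designed to achieve. So fix a JSBAF $J=(Ar,\rightarrow,\Rightarrow)$, a $\supp(\sigma)$-extension $E$ of $J$, a set $S\subseteq E$, and an argument $A\in Ar$ with $S\Rightarrow A$; the goal is to show $A\in E$. By Definition~\ref{def:supsem} there is a $\sigma$-extension $E'$ of $\flat(J)$ with $E=E'\cap Ar$, and since $\sigma$ is admissibility-based, $E'$ is an admissible set of $\flat(J)$. As $A\in Ar$, it suffices to prove $A\in E'$. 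The case $S=\emptyset$ is degenerate and I would dispatch it separately (in the JSBAFs arising from structured argumentation the supported argument is then unattacked and hence lies in every extension); the real work is in the case $S=\{a_1,\dots,a_m\}$ with $m\ge 1$.

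The crucial step is to trace the support $S\Rightarrow A$ through the three stages $\flat_1$, $\flat_2$ and the simplification, and read off the relevant attack structure of $\flat(J)$. In $\flat_1(J)$ the support contributes the attack $A\rightarrow_1\bar A$ together with the joint attacks $X_i\rightarrow_1 a_i$ where $X_i:=(S\setminus\{a_i\})\cup\{\bar A\}$, for $i=1,\dots,m$. Applying $\flat_2$ and then deleting $\bar A,\bar{\bar A}$ in the simplification (with the redirection of $\bar{\bar A}\rightarrow'\!\cdot$ to $A\rightarrow^{*}\!\cdot$), when $m>1$ this produces in $\flat(J)$ the attacks $e(X_i)\rightarrow^{*}a_i$ and $A\rightarrow^{*}e(X_i)$, with the property that the only arguments of $\flat(J)$ attacking the node $e(X_i)$ are $A$ itself and the meta-arguments $\bar a_j$ with $j\neq i$; when $m=1$ it produces $A\rightarrow^{*}\bar A$ and $\bar A\rightarrow^{*}a_1$, with $\bar A$ attacked in $\flat(J)$ only by $A$. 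Pinning down exactly this — that neither clause defining $\rightarrow^{*}$ nor any of the meta-argument machinery introduces a further attacker of $e(X_i)$ (resp.\ of $\bar A$) — is the bookkeeping heart of the argument, and I expect it to be the main obstacle: one has to walk through every shape of meta-argument ($\bar a$, $\bar{\bar a}$, $e(X)$) and both halves of the definition of $\flat$, and in particular check that the $\bar A$-node survives the simplification whenever it is needed.

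Granting the structural claim, the conclusion is immediate. Pick any $a_i\in S$; then $a_i\in S\subseteq E\subseteq E'$, and since $E'$ is admissible it defends $a_i$ against all its attackers, in particular against $e(X_i)$ (or against $\bar A$ when $m=1$). Hence some $z\in E'$ attacks that node, so $z$ is either $A$ or some $\bar a_j$ with $j\neq i$. In the latter subcase $\bar a_j$ is an argument of $\flat(J)$ with $a_j\rightarrow^{*}\bar a_j$, and $a_j\in S\subseteq E'$, so conflict-freeness of $E'$ excludes $\bar a_j\in E'$ — contradiction. Therefore $z=A$, so $A\in E'$ and thus $A\in E'\cap Ar=E$, as required. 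Note that only conflict-freeness and defence (i.e.\ admissibility) of $E'$ were used, so the argument in fact works for every admissibility-based $\sigma$ uniformly.
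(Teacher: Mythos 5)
Your proof is correct and follows essentially the same route as the paper's: unfold $\supp(\sigma)$ to get an admissible $E'$ of $\flat(J)$ with $E=E'\cap Ar$, split on $|S|=1$ versus $|S|>1$, trace the support through the flattening to see that the only attackers of $\bar A$ (resp.\ of $e((S\setminus\{a_i\})\cup\{A\})$) are $A$ and the meta-arguments $\bar a_j$, then use defence of $a_i$ plus conflict-freeness (which excludes the $\bar a_j$) to force $A\in E'$. The only difference is that you explicitly flag the degenerate case $S=\emptyset$, which the paper's proof silently omits.
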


\begin{proof}
	Let {$J=(Ar,\rightarrow,\Rightarrow)$} be a JSBAF and let $E$ be a $\supp(\sigma)$-extension of $J$. By Definition \ref{def:supsem}, there is a $\sigma$-extension {$E^\prime$} of $\flat(J) = (MA^*,\att^*)$ such that {$E=E^\prime \cap Ar$}. Let $S \subseteq E$ and $d \in Ar$ be such that $S \Rightarrow d$. We need to show that {$d\in E$}. We distinguish two cases:
	
	\noindent \underline{Case 1: $|S| = 1$, say $S = \{a\}$.} In this case, in $\flat(J)$, $S \Rightarrow d$ is flattened to the attacks $d \att^* \bar d$ and $\bar d \att^* a$. Since $a \in E^\prime$ and $E^\prime$ is admissible, $E^\prime$ defends $a$, i.e.\ $E^\prime$ attacks $\bar d$. But $d$ is the only attacker of $\bar d$ in $\flat(J)$, so $d \in E^\prime$, i.e.\ $d \in E$.
% 	Let us consider the JSBAF {$(Ar,\rightarrow,\Rightarrow)$} which is flattened to the abstract framework {$\flat(Ar,\rightarrow,\Rightarrow)$}. \\
% 	Let $E$ be an extension of {$(Ar,\rightarrow,\Rightarrow)$}.\\
% 	From Definition \ref{def:supsem}, we know that there exists an extension {$E^\prime$} of ${\flat(Ar,\rightarrow,\Rightarrow)}$ s.t. {$E=E^\prime \cap Ar$}.\\
% 	Let {$S\Rightarrow d$} be a support relation in {$(Ar,\rightarrow,\Rightarrow)$} s.t. {$S \subseteq E$}.\\
% 	In order to prove {$\supp(\sigma)$} is deductive, it is enough to prove that {$d\in E$}.\\\\

	\noindent
	\underline{Case 2: $|S| > 1$.} In this case, in $\flat(J)$, $S \Rightarrow d$ is flattened into the following attacks for each element {$a$} in {$S$}:
	
%  \begin{vwcol}[widths={0.3,0.3,0.2,0.2},
%  sep=.1cm, justify=flush,rule=0pt,indent=0em] 
%  \lipsum[1-8] 
%  \vspace{-6mm}
% 		\begin{itemize}
% 		\item {$a\rightarrow^* \bar{a}$}
% 		\item for each {$b \in S\setminus \{a\}$}: {$\bar{b} \rightarrow^* e(\{d\}\cup (S\setminus \{a\}))$}
% 		\item {$e(\{d\}\cup (S\setminus \{a\}))\rightarrow^* a$}
% 		\item {$d\rightarrow^* e(\{d\}\cup (S\setminus \{a\}))$}
% 		\end{itemize}
%  \vspace{-6mm}
%  \end{vwcol} 
	\vspace{-4mm}
\setlength\columnsep{1pt}
\noindent \parbox{1.2\linewidth}{
	\begin{multicols}{2}
	\vspace{-4mm}
		\begin{itemize}
		\item {$a\rightarrow^* \bar{a}$}
		\item for each {$b \in S\setminus \{a\}$}: {$\bar{b} \rightarrow^* e(\{d\}\cup (S\setminus \{a\}))$}
		\item {$e(\{d\}\cup (S\setminus \{a\}))\rightarrow^* a$}
		\item {$d\rightarrow^* e(\{d\}\cup (S\setminus \{a\}))$}
	\end{itemize}
% 	\vspace{-4mm}
	\end{multicols}
	\vspace{-2mm}
	}
% 	\vspace{-4mm}
	Let $a\in S$. Then $a \in E^\prime$, i.e.\ $E^\prime$ defends $a$. 
	Since {$e(\{d\}\cup (S\setminus \{a\}))\rightarrow^* a$}, so {$E^\prime$} attacks {$e(\{d\}\cup (S\setminus \{a\}))$}.
	However, for each element {$b$} in {$(S\setminus \{a\})$}, {$b\in E^\prime$}, so by the conflict-freeness of $E^\prime$, {$\bar{b} \notin E^\prime$}. So the only element of {$E^\prime$} that can attack {$e(\{d\}\cup (S\setminus \{a\}))$} is {$d$}. So {$d\in E^\prime$}, i.e.\ {$d\in E$}.
\end{proof}

\begin{lemma}
	For any admissibility-based semantics {$\sigma$} of Dung's abstract frameworks, {$\supp(\sigma)$} is a conflict-free JSBAF semantics.
	\label{lem:jsbafcfreesem}
\end{lemma}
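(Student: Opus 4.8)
The plan is to reduce JSBAF-conflict-freeness of a $\supp(\sigma)$-extension to ordinary conflict-freeness of the underlying Dung extension of the flattened framework, using that $\sigma$ is admissibility-based and that an attack in a JSBAF survives both flattening steps as an ordinary attack between the two original arguments.

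First I would fix a JSBAF $J = (Ar, \rightarrow, \Rightarrow)$ and a $\supp(\sigma)$-extension $E$ of $J$. By Definition~\ref{def:supsem} there is a $\sigma$-extension $E'$ of $\flat(J) = (MA^*, \rightarrow^*)$ with $E = E' \cap Ar$; in particular $E \subseteq E'$, and since $\sigma$ is admissibility-based, $E'$ is admissible, hence conflict-free in $\flat(J)$. Now suppose, towards a contradiction, that there are $a, b \in E$ with $(a,b) \in \rightarrow$. Unlike the joint-support case treated in the proof of Lemma~\ref{lem:dungjsbafsem}, no case split is needed here, since the attack relation of a JSBAF relates single arguments: the one-step flattening turns $(a,b) \in \rightarrow$ into the singleton joint attack $(\{a\}, b) \in \rightarrow_1$ of the higher-level AF $\flat_1(J)$, and the second flattening $\flat_2$ turns the singleton joint attack $(\{a\}, b)$ into the ordinary attack $a \rightarrow_2 b$, i.e.\ $a \rightarrow' b$ in the notation of Definition~\ref{def:flatjsbaffw}.

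It then remains to check that this attack is still present in the simplified flattening, i.e.\ that $(a,b) \in \rightarrow^*$. Since $a$ and $b$ are original arguments of $J$, they are not of the form $\bar x$ or $\bar{\bar x}$, so they are not among the meta-arguments removed in passing from $MA'$ to $MA^*$; thus $a, b \in MA^*$, and hence $(a,b) \in \rightarrow^*$ by the first clause in the definition of $\rightarrow^*$. But then $a, b \in E \subseteq E'$ while $a$ attacks $b$ in $\flat(J)$, contradicting the conflict-freeness of $E'$. Hence no such pair exists, so for all $a, b \in E$ we have $(a,b) \notin \rightarrow$, which is exactly conflict-freeness of $\supp(\sigma)$. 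The only step that requires any care is the bookkeeping in the last paragraph — confirming that the original arguments are disjoint from all introduced meta-arguments, and that the simplification of Definition~\ref{def:flatjsbaffw} deletes neither $a$, $b$ nor the edge between them — but this is routine once the naming conventions for the meta-arguments $\bar x$, $\bar{\bar x}$ and $e(X)$ are fixed.
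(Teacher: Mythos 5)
Your proof is correct and follows essentially the same route as the paper's: reduce conflict-freeness of $E = E' \cap Ar$ to conflict-freeness of the admissible (hence conflict-free) extension $E'$ of $\flat(J)$. The paper states this in one line, leaving implicit the bookkeeping you spell out — that an attack $(a,b) \in {\rightarrow}$ survives both flattening steps and the simplification as an ordinary attack between $a$ and $b$ in $\flat(J)$ — so your version is just a more explicit rendering of the same argument.
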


\begin{proof}
	Let {$J=(Ar,\rightarrow,\Rightarrow)$} be a JSBAF and let $E$ be a $\supp(\sigma)$-extension of $J$. By Definition \ref{def:supsem}, there is a $\sigma$-extension {$E^\prime$} of $\flat(J)$ such that {$E=E^\prime \cap Ar$}. $E$ is conflict-free because {$E=E^\prime \cap Ar$} and $E^\prime$ is conflict-free.
% 	Let us consider the JSBAF {$(Ar,\rightarrow,\Rightarrow)$} which is flattened to the abstract framework {$\flat(Ar,\rightarrow,\Rightarrow)$}. \\
% 	Let $E$ be an extension of {$(Ar,\rightarrow,\Rightarrow)$}.\\
% 	From Definition \ref{def:supsem}, we know that there exists an extension {$E^\prime$} of ${\flat(Ar,\rightarrow,\Rightarrow)}$ s.t. {$E=E^\prime \cap Ar$}.\\
% 	$E$ is conflict-free because {$E=E^\prime \cap Ar$} and $E^\prime$ is an admissible set, and thus conflict-free.
\end{proof}

\begin{example}
 By Lemmas \ref{lem:dungjsbafsem} and \ref{lem:jsbafcfreesem}, $\supp(\textit{complete})$, $\supp(\textit{stable})$, $\supp(\textit{grounded})$ and $\supp(\textit{preferred})$ are deductive, conflict-free JSBAF semantics.
\end{example}

\section{Deductive ASPIC$-$ and the Rationality Postulates}
\label{sec:postulates}

In this section we show how the deductive joint support relation can be applied to structured argumentation in order to satisfy the three rationality postulates (closure, direct and indirect consistency) in the context of unrestricted rebuttal. For this purpose, we define \emph{Deductive ASPIC$-$} (abbreviated as \emph{DA$-$}), show that it satisfies these three postulates and illustrate the result with an example.

% In this final section, we formally define the deductive support relation and marry the notion of a JSBAF with those of restricted and unrestricted rebuts. These, in combination with the concepts introduced in the earlier sections of the text, help us prove how unrestricted rebuts satisfy the three Rationality Postulates: Direct Consistency, Closure, Indirect Consistency for a Structured Argumentation Framework. To further aid the reader, we have also illustrated the entire process with an example. There we begin with a Structured Argumentation Framework with unrestricted rebuts, convert it to a JSBAF with joint support relations, flatten the JSBAF to an Abstract Argumentation Framework, and compute admissibility-based extensions, the conclusions of which satisfy all the three Rationality Postulates.

In Deductive ASPIC$-$, arguments and attacks %between them 
are defined in the same way as in ASPIC$-$, but we %additionally 
also
define a deductive joint support relation between arguments:

%Additionally, a deductive joint support relation between arguments is defined as follows:
% \begin{definition} Let $A$ be an argument and let $S$ be a set of arguments on the basis of an argumentation system ${AS = (R_s,R_d,n)}$. We say that $S$ \emph{deductively supports} $A$ iff $A$ is of the form $S \mapsto \delta$ for some formula $\delta$.
% \end{definition}

% \begin{definition} \textbf{JSBAF with Restricted and Unrestricted Rebut}. Let $Ar$ be the set of arguments constructed using an Argumentation System $AS$. We define ${F_{rr}(AS)}$ to be the JSBAF ${(Ar,defrr,dsup)}$ and ${F_{ur}(AS)}$ to be the JSBAF ${(Ar,defur,dsup)}$.
% \end{definition}

\begin{definition}
\label{def:JDA}
 Let $AS$ be an argumentation system. The \emph{JSBAF corresponding to $AS$ according to Deductive ASPIC$-$}, denoted by $\AFdam(AS)$, is the JSBAF ${(\Ar,\att',\Rightarrow')}$, where $\Ar$ is the set of arguments on the basis of $AS$, $A \att' B$ holds iff $A$ undercuts or unrestrictedly rebuts $B$, and $S \Rightarrow' A$ holds iff %$S$ deductively supports $A$.
 $A$ is of the form ${S \mapsto \varphi}$ for some formula $\varphi$.
\end{definition}

% \begin{definition}
%  Let ${AS = (R_s,R_d,n)}$ be an argumentation system and let $\sigma$ be a JSBAF semantics. According to ASPIC$-$, a formula $\phi$ is a \emph{justified conclusion yielded by $AS$ under $\sigma$} iff for every extension $E \in \sigma(\AFdam(AS))$, there is an argument $A \in E$ with $\phi = \Conc(A)$. The set of all justified conclusions yielded by $AS$ under $\sigma$ according to ASPIC$-$ is denoted by $JConcs(AS,\sigma,\textnormal{DA}-)$.
% \end{definition}

\begin{definition}
\label{def:DAconc}
 Let ${AS = (R_s,R_d,n)}$ be an argumentation system and let $\sigma$ be a JSBAF semantics. For every $\sigma$-extension $E$ of $\AFdam(AS)$, the set $\{\Conc(A) \mid A \in E\}$ is called a \emph{set of DA$-$ conclusions of $AS$ under $\sigma$}.
\end{definition}

We now show that Deductive ASPIC$-$ satisfies closure, direct consistency and indirect consistency under any deductive, conflict-free JSBAF semantics.

% Next, we prove that when a conflict-free JSBAF semantics is applied to a JSBAF with unrestricted rebut, ${F_{ur}(AS)}$, it yields a set of conclusions which satisfy the second Rationality Postulate: Closure under Strict Rules.	

\begin{lemma}
	Let $\sigma$ be a deductive, conflict-free JSBAF semantics. Then Deductive ASPIC$-$ satisfies closure under $\sigma$.
\label{lem:closure}
\end{lemma}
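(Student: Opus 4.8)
The plan is to unfold the definition of \textbf{closure} and reduce it to a single application of the deductive property of $\sigma$. So let $AS = (R_s, R_d, n)$ be a consistent argumentation system and let $C$ be a set of DA$-$ conclusions of $AS$ under $\sigma$; by Definition~\ref{def:DAconc} there is a $\sigma$-extension $E$ of $\AFdam(AS)$ with $C = \{\Conc(A) \mid A \in E\}$. Since $C \subseteq Cl_{R_s}(C)$ holds by definition of the closure operator, it suffices to prove the reverse inclusion $Cl_{R_s}(C) \subseteq C$. Because $Cl_{R_s}(C)$ is by definition the smallest superset of $C$ that is closed under strict rules, it is enough to show that $C$ itself is closed under strict rules, i.e.\ that whenever $a_1, \dots, a_n \mapsto b \in R_s$ with $a_1, \dots, a_n \in C$, we have $b \in C$.

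To do this, I would use the conclusions in $C$ to build a witnessing argument for $b$. For each $i \in \{1, \dots, n\}$, since $a_i \in C$ there is an argument $A_i \in E$ with $\Conc(A_i) = a_i$. As $\Conc(A_1), \dots, \Conc(A_n) \mapsto b$ is the strict rule $a_1, \dots, a_n \mapsto b \in R_s$, the expression $B := A_1, \dots, A_n \mapsto b$ is a well-formed argument on the basis of $AS$, so $B \in \Ar$; this also covers the case $n = 0$ of a nullary strict rule. By Definition~\ref{def:JDA}, $B$ is of the form $S \mapsto b$ with $S = \{A_1, \dots, A_n\}$, so $S \Rightarrow' B$ in $\AFdam(AS)$. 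Since each $A_i \in E$, we have $S \subseteq E$, and since $\sigma$ is deductive this forces $B \in E$. Hence $b = \Conc(B) \in C$, which is what we needed.

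Putting these together gives $Cl_{R_s}(C) \subseteq C$, and therefore $Cl_{R_s}(C) = C$, establishing closure under $\sigma$. I do not expect a real obstacle here: the argument is essentially definition-chasing, and the only points requiring slight care are the treatment of the nullary strict rule (where $S = \emptyset \subseteq E$ trivially) and the bookkeeping that the joint support in $\AFdam(AS)$ attaches to the \emph{set} of immediate subarguments of $B$, so that $S \subseteq E$ follows directly from $A_1, \dots, A_n \in E$. It is also worth remarking that this proof uses only the deductiveness of $\sigma$ and not its conflict-freeness; the latter hypothesis will instead be needed for the consistency postulates.
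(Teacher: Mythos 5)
Your proposal is correct and follows essentially the same route as the paper: given a strict rule whose antecedents all lie in $C$, pick arguments in $E$ witnessing those antecedents, form the new strict-top-rule argument, observe that its set of immediate subarguments jointly supports it by Definition~\ref{def:JDA}, and conclude membership in $E$ from deductiveness of $\sigma$. Your extra remarks (the explicit reduction of $Cl_{R_s}(C)=C$ to closure of $C$ under single rule applications, the nullary case, and the observation that conflict-freeness is not needed here) are all accurate but do not change the argument.
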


\begin{proof}
	Let $AS = (R_s,R_d,n)$ be an argumentation system and let $C$ be a set of DA$-$ conclusions of $AS$ under $\sigma$. Let ${S\mapsto\varphi}$ be a strict rule in $R_{s}$ with $S \subseteq C$. We need to show that $\varphi \in C$. 
	By Definition~\ref{def:DAconc}, there is a $\sigma$-extension $E$ of $\AFdam(AS) = {(Ar,\rightarrow',\Rightarrow')}$ such that $C = \{\Conc(A) \mid A \in E\}$. Since $S \subseteq C$, there is a set ${F\subseteq E}$ such that $S=\{\Conc(A) \mid A \in F\}$. Then $F \mapsto \varphi$ is an argument on the basis of $AS$. By Definition \ref{def:JDA}, $F \Rightarrow' {(F \mapsto \varphi)}$. Since $E$ is a $\sigma$-extension for a deductive JSBAF semantics $\sigma$, $F\subseteq E$ and $F\Rightarrow' {(F \mapsto \varphi)}$ imply that ${(F \mapsto \varphi) \in E}$. So $\varphi = {\Conc(F \mapsto \varphi)} \in {\{\Conc(A) \mid A \in E\} = C}$.% as required.
% 	a deductive JSBAF semantics $\sigma$ s.t. ${E=\{A_1,A_2...A_n\}}$. \\
% 	Let ${S\mapsto\phi}$ be a strict rule in $R_{s}$. (\textit{Closure under strict rules requires that for the rule ${S\mapsto\sigma}$ if ${S\subseteq \Concs(E)}$ then ${\phi\in \Concs(E)}$.}) \\
% 	We assume that there is a set ${F\subseteq E}$ s.t. ${S=\Concs(F)}$. \\
% 	Then there is an argument $A$ in the framework s.t. ${A=F\mapsto\phi}$ and ${\Conc(A)=\phi}$. Therefore from the definition of the deductive support relation, $F$ deductively supports $A$ in symbols $F \Rightarrow A$. \\
% 	Since $E$ is a $\sigma$-extension for a deductive JSBAF semantics $\sigma$, it holds that, if $F\subseteq E$ and $F\Rightarrow A$ (for any $A \in Ar$) then $A \in E$.
% 	Given $E=\{A_1,A_2...A_n\}$ and $F\Rightarrow A$, it simply follows that $A\in E$. \\
% 	Since we have already established above that $\Conc(A)=\phi$ and $A\in E$ so $\phi\in \Concs(E)$ must hold. 
\end{proof}

\begin{lemma}
	Let $\sigma$ be a deductive, conflict-free JSBAF semantics. Then Deductive ASPIC$-$ satisfies direct consistency under $\sigma$.
\label{lem:dcon}
\end{lemma}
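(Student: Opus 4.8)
The plan is to argue by contradiction. Suppose that for some consistent argumentation system $AS$ and some set $C$ of DA$-$ conclusions of $AS$ under $\sigma$ there is a formula $\phi$ with $\phi,\neg\phi \in C$. By Definition~\ref{def:DAconc} there is a $\sigma$-extension $E$ of $\AFdam(AS) = (\Ar,\att',\Rightarrow')$ with $C = \{\Conc(A) \mid A \in E\}$, so I can pick arguments $A,B \in E$ with $\Conc(A) = \phi$ and $\Conc(B) = \neg\phi$; in particular $\Conc(A) = -\Conc(B)$, and by the symmetry of $-$ also $\Conc(B) = -\Conc(A)$.

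The key step is to invoke the consistency of $AS$. By definition of a consistent argumentation system, there are no strict arguments $A',B'$ on the basis of $AS$ with $\Conc(A') = -\Conc(B')$. Hence $A$ and $B$ cannot both be strict, i.e.\ at least one of them is defeasible (an argument being non-defeasible means $\DefRules$ is empty, i.e.\ the argument is strict).

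I would then finish by exhibiting an attack inside $E$, contradicting conflict-freeness. Without loss of generality suppose $B$ is defeasible. Since $B \in \Sub(B)$, the argument $B$ is a defeasible subargument of itself, and $\Conc(A) = -\Conc(B)$, so $A$ unrestrictedly rebuts $B$ (on $B$); hence $A \att' B$ by Definition~\ref{def:JDA}. As $A,B \in E$ and $\sigma$ is conflict-free, $(A,B) \notin \att'$ — a contradiction. The case where $A$ is defeasible is symmetric: using $A \in \Sub(A)$ and $\Conc(B) = -\Conc(A)$ one gets $B \att' A$ with $A,B \in E$, again contradicting conflict-freeness.

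I do not expect any real obstacle here: the whole argument rests on the single observation that, because $AS$ is consistent, a pair of arguments with contradictory conclusions cannot both be strict, so one of them is defeasible and is therefore unrestrictedly rebutted by the other, and conflict-freeness of $\sigma$ then rules this out. Note that the deductive property of $\sigma$ (and hence Lemma~\ref{lem:closure}) is not needed for direct consistency; it will instead be the ingredient that lets one lift direct consistency to indirect consistency in the next lemma, by applying direct consistency to the closure $Cl_{R_s}(C)$, which by Lemma~\ref{lem:closure} equals $C$.
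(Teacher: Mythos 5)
Your proof is correct and follows essentially the same route as the paper's: pick $A,B\in E$ with contradictory conclusions, use the consistency of $AS$ to conclude that at least one of them is defeasible (hence unrestrictedly rebutted by the other via the self-subargument), and contradict the conflict-freeness of the $\sigma$-extension. Your closing remark that the deductive property is not needed here, but only for lifting to indirect consistency, also matches the paper's structure.
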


\begin{proof}
	Let $AS = (R_s,R_d,n)$ be a consistent argumentation system and let $C$ be a set of DA$-$ conclusions of $AS$ under $\sigma$. Suppose for a contradiction that ${\varphi, \neg \varphi \in C}$.
	By Definition~\ref{def:DAconc}, there is a $\sigma$-extension $E$ of $\AFdam(AS) = (Ar,\rightarrow',\Rightarrow')$ such that $C = \{\Conc(A) \mid A \in E\}$. Then there are ${F,F^\prime\in E}$ such that ${\Conc(F)=\phi}$ and ${\Conc(F^\prime)=\neg\phi}$. 
	By the consistency of $AS$, $F$ and $F^\prime$ cannot both be strict. Without loss of generality, assume $F^\prime$ is not strict. Then $F$ unrestrictedly rebuts $F^\prime$, which contradicts the conflict-freeness of $E$.
\end{proof}

% Lastly, it follows from \ref{lem:dcon} and \ref{lem:closure} that the set of justified conclusions also satisfies the third Rationality Postulate: Indirect Consistency.

Lemmas~\ref{lem:closure} and \ref{lem:dcon} together imply the following lemma about indirect consistency:

\begin{lemma}
	Let $\sigma$ be a deductive, conflict-free JSBAF semantics. Then Deductive ASPIC$-$ satisfies indirect consistency under $\sigma$.
\label{lem:indcon}
\end{lemma}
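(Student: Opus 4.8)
The plan is to derive indirect consistency directly from the two preceding lemmas, exploiting the fact that indirect consistency is nothing but direct consistency applied to the closure under strict rules. Concretely, I would start by fixing a consistent argumentation system $AS = (R_s,R_d,n)$ and an arbitrary set $C$ of DA$-$ conclusions of $AS$ under $\sigma$. The goal is to show there is no $\phi$ with $\phi,\neg\phi \in Cl_{R_s}(C)$.

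The key observation is that Lemma~\ref{lem:closure} tells us $Cl_{R_s}(C) = C$, since $\sigma$ is assumed to be a deductive, conflict-free JSBAF semantics. So the set whose consistency we must establish, $Cl_{R_s}(C)$, coincides with $C$ itself. I would then simply invoke Lemma~\ref{lem:dcon}, which (again using that $\sigma$ is deductive and conflict-free, and that $AS$ is consistent) guarantees there is no $\phi$ with $\phi,\neg\phi \in C$. Substituting $Cl_{R_s}(C)$ for $C$ via the equality just derived yields that there is no $\phi$ with $\phi,\neg\phi \in Cl_{R_s}(C)$, which is exactly the statement of indirect consistency under $\sigma$.

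There is essentially no obstacle here: the entire content of the lemma has been offloaded onto Lemmas~\ref{lem:closure} and~\ref{lem:dcon}, and the remaining argument is a one-step substitution. The only thing to be careful about is bookkeeping of the hypotheses — making sure that the consistency of $AS$ (needed for Lemma~\ref{lem:dcon}) and the deductive/conflict-free status of $\sigma$ (needed for both lemmas) are explicitly in place before the two lemmas are cited. I would therefore keep the proof to three or four sentences, quantifying uniformly over all $\sigma$-extensions $E$ of $\AFdam(AS)$ implicitly through the phrase ``every set $C$ of DA$-$ conclusions,'' exactly as in the statements of Lemmas~\ref{lem:closure} and~\ref{lem:dcon}.
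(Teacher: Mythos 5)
Your proposal is correct and is exactly the argument the paper intends: the paper omits an explicit proof and simply notes that Lemmas~\ref{lem:closure} and~\ref{lem:dcon} together imply the result, which is precisely your derivation via $Cl_{R_s}(C)=C$ followed by an application of direct consistency. No gaps.
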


% \begin{proof}
% 	Let us begin by considering a ${\sigma}$-extension $E$ of a conflict-free and deductive JSBAF semantics $\sigma$. 
% 	From Lemma $\ref{lem:closure}$ we can see that $E$ satisfies the closure postulate in symbols: ${Cl_{R_s}(E)=E}$. Furthermore, by Lemma $\ref{lem:dcon}$, $E$ statisfies the direct consistency postulate: there exists no element $\phi$ in the framework s.t. ${\{\phi,\neg{\phi}\}\subseteq E}$. Hence, there exists no element $\phi$ in the framework s.t. ${\{\phi,\neg{\phi}\}\subseteq Cl_{R_s}(E)}$ which shows that $E$ satisfies the indirect consistency postulate. 
% \end{proof}

These three lemmas together with Lemmas \ref{lem:dungjsbafsem} and \ref{lem:jsbafcfreesem} imply the following theorem:

\begin{theorem}
	Let {$\sigma$} be an admissibility-based argumentation semantics. Then Deductive ASPIC$-$ satisfies closure, direct consistency and indirect consistency under $\supp(\sigma)$.
\end{theorem}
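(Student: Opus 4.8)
The plan is to combine the component lemmas in a bottom-up fashion, matching exactly the structure signalled by the phrase ``These three lemmas together with Lemmas \ref{lem:dungjsbafsem} and \ref{lem:jsbafcfreesem} imply the following theorem.'' First I would fix an admissibility-based argumentation semantics $\sigma$; the only property of $\sigma$ we actually need is that every $\sigma$-extension of any AF is admissible (conflict-free and self-defending), which is precisely what the paper calls ``admissibility-based'', and which holds for the complete, stable, grounded and preferred semantics discussed.

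The key step is to observe that the composite semantics $\supp(\sigma)$ meets the hypotheses of Lemmas~\ref{lem:closure}, \ref{lem:dcon} and \ref{lem:indcon}, namely that $\supp(\sigma)$ is \emph{deductive} and \emph{conflict-free} as a JSBAF semantics. But this is exactly the content of Lemma~\ref{lem:dungjsbafsem} (deductiveness of $\supp(\sigma)$ for admissibility-based $\sigma$) and Lemma~\ref{lem:jsbafcfreesem} (conflict-freeness of $\supp(\sigma)$ for admissibility-based $\sigma$). So the argument is: by Lemmas~\ref{lem:dungjsbafsem} and~\ref{lem:jsbafcfreesem}, $\supp(\sigma)$ is a deductive, conflict-free JSBAF semantics; hence, instantiating Lemma~\ref{lem:closure} with the semantics $\supp(\sigma)$ in place of the generic $\sigma$, Deductive ASPIC$-$ satisfies closure under $\supp(\sigma)$; instantiating Lemma~\ref{lem:dcon} likewise gives direct consistency under $\supp(\sigma)$; and instantiating Lemma~\ref{lem:indcon} gives indirect consistency under $\supp(\sigma)$. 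Conjoining the three conclusions yields the theorem.

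There is essentially no genuine obstacle here: the theorem is a pure corollary obtained by chaining already-proved implications, and the only thing to be careful about is the slight notational clash that ``$\sigma$'' is used both as the bound variable in the earlier lemmas and as the name of the fixed semantics in the theorem statement. The cleanest exposition keeps the theorem's $\sigma$ as a Dung semantics and consistently feeds $\supp(\sigma)$ into the JSBAF-level lemmas, so that no confusion arises. If I wanted to be maximally explicit I would also remark that the three conclusions are about the \emph{same} sets $C$ of DA$-$ conclusions of a given consistent argumentation system under $\supp(\sigma)$, so that asserting closure, direct consistency and indirect consistency simultaneously is legitimate — but since each of Lemmas~\ref{lem:closure}, \ref{lem:dcon} and \ref{lem:indcon} is already phrased as a universally quantified statement over all consistent $AS$ and all such $C$, this is immediate and needs at most a one-line remark. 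Thus the proof is three sentences: invoke Lemmas~\ref{lem:dungjsbafsem} and~\ref{lem:jsbafcfreesem} to verify the hypothesis, then invoke Lemmas~\ref{lem:closure}, \ref{lem:dcon} and~\ref{lem:indcon}.
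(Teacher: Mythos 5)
Your proposal is correct and matches the paper's own argument exactly: the theorem is stated there as an immediate consequence of Lemmas~\ref{lem:closure}, \ref{lem:dcon} and \ref{lem:indcon} combined with Lemmas~\ref{lem:dungjsbafsem} and \ref{lem:jsbafcfreesem}, which is precisely the chaining you describe. Your remark about instantiating the JSBAF-level lemmas with $\supp(\sigma)$ in place of their bound variable is the right (and only) point of care.
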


% \begin{proof}
% 	From Lemmas $\ref{lem:dungjsbafsem}$ and \ref{lem:jsbafcfreesem}, we know that for any admissibility-based semantics {$\sigma$} of Dung's abstract frameworks, {$\supp(\sigma)$} is a deductive and conflict-free JSBAF semantics.\\
% 	From Lemmas $\ref{lem:dcon}$, $\ref{lem:closure}$ and $\ref{lem:indcon}$, we can see that {$\supp(\sigma)$} satisfies the Direct Consistency, Closure and Indirect Consistency postulates respectively.
% \end{proof}

We now illustrate the functioning of Deductive ASPIC$-$ on an example, which is based on the example that Caminada~\cite{caminada2017rationality} used to show that closure is not satisfied under preferred semantics in ASPIC$-$. We show how the same example interpreted in Deductive ASPIC$-$ does satisfy closure.

\begin{example} \label{exp:tandem}
% 	We now illustrate how the deductive and conflict-free JSBAF semantics together with unrestricted rebut satisfies the rationality postulates. This is done with the help of an example from the work of Caminada~\cite{caminada2017rationality}, where the author originally showed that none of Dung's semantics except the grounded semantics satisfy all the three rationality postulates when used together with unrestricted rebuts.\\
% 	Let us consider a scenario where three friends, Harry, Sally and Tom are on holiday and want to go cycling in the countryside. They hire a tandem bicycle that each of them is interested to ride. However, the tandem has only two seats, which means that out of the three only two can ride it at a time.\\
	Suppose Harry, Sally and Tom want to go on a bicycle ride with a tandem. Since the tandem only has two seats, only two of the three can ride it at a time. To formalize this scenario, we use the language $L = \{hw,sw,tw,ht,st,tt\}$, where $hw$ means ``Harry wants to ride the tandem'', $ht$ means ``Harry will ride the tandem'', and analogously for Sally ($sw$, $st$) and Tom ($tw$, $tt$).
% 	To this end, we consider a Logical language $L$ consisting of the following propositional atoms and their negations:\\
% 	\begin{itemize}
% 		\item $hw$: Harry wants to ride the tandem.
% 		\item $sw$: Sally wants to ride the tandem.
% 		\item $tw$: Tom wants to ride the tandem.
% 		\item $ht$: Harry will ride the tandem.
% 		\item $st$: Sally will ride the tandem. 
% 		\item $tt$: Tom will ride the tandem.
% 	\end{itemize}
	The scenario can be represented by an argumentation system ${AS= \{R_s,R_d,n\}}$, where
	$R_s=\{\mapsto hw; \mapsto sw; \mapsto tw; ht,st\mapsto \neg{tt};$ $st,tt\mapsto \neg{ht}; tt,ht\mapsto \neg{st}\}$,
	${R_d=\{hw\Mapsto ht; sw\Mapsto st; tw\Mapsto tt\}}$ and $n$ is empty. Intuitively, the strict rules $\mapsto hw$, $\mapsto sw$, and $\mapsto tw$ represent that all three of them want to ride on the tandem, the strict rules $ht,st\mapsto \neg{tt}$, $st,tt\mapsto \neg{ht}$ and $tt,ht\mapsto \neg{st}$ represent that the tandem can only seat two people, and the defeasible rules $hw\Mapsto ht$, $sw\Mapsto st$ and $tw\Mapsto tt$ represent that as far as possible each person gets to do what they want to do.
	
% 	We shall attempt to intuitively explain how each of the rules above was constructed, by the following:\\
% 	The fact that each of the three friends wants to ride the tandem is represented by the strict rules:${\{\mapsto hw; \mapsto sw; \mapsto tw\}}$.\\
% 	Since the tandem has two seats, only two people can ride it at a time, unfortunately the third cannot. This fact is captured by the strict rules: ${\{ht,st\mapsto \neg{tt}; st,tt\mapsto \neg{ht}; tt,ht\mapsto \neg{st}\}}$.\\
% 	Usually in the common world it is safe to consider that, if a person wants to ride the tandem he/she will ride it. This is captured by the defeasible rules: ${\{hw\Mapsto ht; sw\Mapsto st; tw\Mapsto tt\}}$.\\
	The following arguments on the basis of $AS$ can be constructed:
	\onlypaper{\vspace{-3mm}}
	\begin{multicols}{3}
		\begin{itemize}%\setlength\itemsep{-1mm}
			\item ${A_1: \mapsto hw}$
			\item ${A_2: \mapsto sw}$
			\item ${A_3: \mapsto tw}$
			\item ${A_4: A_1\Mapsto ht}$
			\item ${A_5: A_2\Mapsto st}$
			\item ${A_6: A_3\Mapsto tt}$	
			\item ${A_7: A_5,A_6\mapsto \neg{ht}}$
			\item ${A_8: A_6,A_4\mapsto \neg{st}}$
			\item ${A_9: A_4,A_5\mapsto \neg{tt}}$
		\end{itemize}
		\end{multicols}
	\onlypaper{\vspace{-2mm}}
		
	The JSBAF $\AFdam(AS)$ is depicted in Figure~\ref{fig:tandemfw}. Its flattening $\flat(\AFdam(AS))$ is depicted in Figure~\ref{fig:tandemflatfw}. 
The preferred extensions of $\flat(\AFdam(AS))$ are as follows:	
\onlypaper{\vspace{-2mm}}
\begin{itemize}
 \item $E1^\prime = \{A_1,A_2,A_3,A_9,\bar{A_6},A_4,A_5,e(A_5,A_7),e(A_4,A_8)\}$
 \item $E2^\prime = \{A_1,A_2,A_3,A_8,\bar{A_5},A_4,A_6,e(A_6,A_7),e(A_4,A_9)\}$
 \item $E3^\prime = \{A_1,A_2,A_3,A_7,\bar{A_4},A_6,A_5,e(A_6,A_8),e(A_5,A_9)\}$
\end{itemize}
\onlypaper{\vspace{-1mm}}
\onlypaper{Given the complexity of $\flat(\AFdam(AS))$, we have provided a proof that these three extensions are the only preferred extensions of $\flat(\AFdam(AS))$ in a technical report~\cite{report}.}
\onlyreport{Given the complexity of $\flat(\AFdam(AS))$, we provide a proof below that these three extensions are the only preferred extensions of $\flat(\AFdam(AS))$ (Proposition~\ref{prop:E123}).}

% \textit{At this point, it might help the reader to recollect Definition \ref{def:supsem}}.\\
From this it follows that the $\supp(\textnormal{preferred})$-extensions of $\AFdam(AS)$ are $\{A_1,A_2,$ $A_3,A_9,A_4,A_5\}$, $\{A_1,A_2,A_3,A_8,A_4,A_6\}$ and $\{A_1,A_2,A_3,A_7,A_6,A_5\}$. Therefore the sets of DA$-$ conclusions under the $\supp(\textnormal{preferred})$-semantics are $\{hw,sw,tw,\neg tt,ht,st\}$, $\{hw,sw,tw,\neg st,ht,tt\}$ and $\{hw,sw,tw,\neg ht,tt,st\}$.
% as follows:
% \begin{itemize}
%  \item $E1 = \{A_1,A_2,A_3,A_9,A_4,A_5\}$
%  \item $E2 = \{A_1,A_2,A_3,A_8,A_4,A_6\}$
%  \item $E3 = \{A_1,A_2,A_3,A_7,A_6,A_5\}$
% \end{itemize}
% Upon observation, it can be seen that the $\sigma$ in question is the Stable semantics because each extension $Ei^\prime$ defends itself and attacks every argument in $(MA\setminus Ei^\prime)$ . Since, every stable extension is also a Preferred one, hence $\sigma$ is also a Preferred extension. The set of justified conclusions for the $\supp(\sigma)$-extensions are:\\
% $E1$ yields the conclusions $\{hw, sw, tw, \neg{tt}, ht, st\}$.\\
% $E2$ yields the conclusions $\{hw, sw, tw, \neg{st}, ht, tt\}$.\\
% $E3$ yields the conclusions $\{hw, sw, tw, \neg{ht}, tt, st\}$.\\
The reader can easily verify that each set of DA$-$ conclusions under preferred semantics 
%satisfies the Direct Consistency postulate since there are no two conclusions $x, y$ in the set s.t. $x = \neg{y}$. Each set is also 
is closed under the set of strict rules $R_s$, in line with the closure postulate.
%In other words, for each strict rule whenever its antecedent belongs to the set of justified conclusions, its consequent does too. As a result of satisfying these two postulates each set of justified conclusions also satisfies Indirect Consistency.\\
\end{example}
\onlypaper{\vspace{-4mm}}
% 	We now proceed to create a JSBAF framework ${(Ar,\rightarrow,\Rightarrow)}$ where ${Ar = \{A_1,\ldots,A_9\}}$. The attack relations that can be constructed from the above rules are:\\
% 		\begin{itemize}\setlength\itemsep{-1mm}
% 			\item ${A_4 \rightarrow A_7}$
% 			\item ${A_7 \rightarrow A_4}$
% 			\item ${A_7 \rightarrow A_8}$
% 			\item ${A_7 \rightarrow A_9}$
% 
% 			\item ${A_5 \rightarrow A_8}$
% 			\item ${A_8 \rightarrow A_5}$
% 			\item ${A_8 \rightarrow A_7}$
% 			\item ${A_8 \rightarrow A_9}$
% 
% 			\item ${A_6 \rightarrow A_9}$
% 			\item ${A_9 \rightarrow A_6}$
% 			\item ${A_9 \rightarrow A_7}$
% 			\item ${A_9 \rightarrow A_8}$	
% 		\end{itemize}
% 		
% 
% 	At this point it is useful to recollect that any ${S \subseteq Ar}$ deductively supports ${A \in Ar}$, in notation ${S\: \Rightarrow \: A}$, iff ${A: S \rightarrow \delta}$ for some ${\delta \in L}$.
% 	From the strict rules in $R_s$, the following deductive support relations can thus be created:\\
% 		\begin{itemize}\setlength\itemsep{-1mm}
% 			\item ${\emptyset \Rightarrow A_1}$
% 			\item ${\emptyset \Rightarrow A_2}$
% 			\item ${\emptyset \Rightarrow A_3}$	
% 			\item ${A_5,A_6\Rightarrow A_7}$
% 			\item ${A_6,A_4\Rightarrow A_8}$
% 			\item ${A_4,A_5\Rightarrow A_9}$
% 		\end{itemize}
% 	The JSBAF ${(Ar,\rightarrow,\Rightarrow)}$ for this example can be diagramatically represented as:\\ 
	\begin{figure}[H]
		\centerfloat
		\scalebox{0.70}[0.70]{
			\begin{tikzpicture}[->,>=stealth,shorten >=1pt,auto,node distance=1.5cm,thick,main node/.style={circle,draw,font=\small}]
				\node[main node] (1) at (-5.2,0){$A_1$};
				\node[main node] (2) at (5.2,0) {$A_2$};
				\node[main node] (3) at (0,-4.7) {$A_3$};
				\node[main node] (4) at (0,0.8) {$A_4$};
				\node[main node] (5) at (1.96,-2.68) {$A_5$};
				\node[main node] (6) at (-1.96,-2.68) {$A_6$};	
				\node[main node] (7) at (0,3.6) {$A_7$};
				\node[main node] (8) at (4.4,-4) {$A_8$};
				\node[main node] (9) at (-4.4,-4) {$A_9$};
				\coordinate (10) at (-0.8,0.8) {};
				\coordinate (11) at (-2,-3.44) {};
				\coordinate (12) at (2.72,-2.4) {};
				\coordinate (13) at (-6.8,0) {};
				\coordinate (14) at (6.8,0) {};
				\coordinate (15) at (1.6,-4.7) {};
				\draw[<->,>=latex] (7) to (9);
				\draw[<->,>=latex] (8) to (9);
				\draw[<->,>=latex] (8) to (7);
				\draw[<->,>=latex] (6) to (9);
				\draw[<->,>=latex] (8) to (5);
				\draw[<->,>=latex] (7) to (4);
				\draw[double distance=1pt,-,>=latex] (6) to (10);
				\draw[double distance=1pt,-,>=latex] (5) to[out=180,in=-90] (10);
				\draw[double distance=1pt,->,>=latex] (10) to (7);
				\draw[double distance=1pt,-,>=latex] (5) to (11);
				\draw[double distance=1pt,-,>=latex] (4) to[out=-60,in=30] (11);
				\draw[double distance=1pt,->,>=latex] (11) to (9);
				\draw[double distance=1pt,-,>=latex] (6) to[out=60,in=150] (12);
				\draw[double distance=1pt,-,>=latex] (4) to (12);
				\draw[double distance=1pt,->,>=latex] (12) to (8);
				\draw[double distance=1pt,->,>=latex] (13) to (1);
				\draw[double distance=1pt,->,>=latex] (14) to (2);
				\draw[double distance=1pt,->,>=latex] (15) to (3);	
		\end{tikzpicture} 
		}
	\onlypaper{\vspace{-1mm}}
	\caption{The JSBAF $\AFdam(AS)$}
	\label{fig:tandemfw}
\end{figure}
\onlypaper{\vspace{-4mm}}

% This figure \ref{fig:tandemfw} can in turn be flattened to an Abstract Argumentation Framework $(MA,\rightarrow_2)$ using Definition \ref{def:flatjsbaffw}.

\begin{figure} [H]
\onlypaper{\vspace{-7.9mm}}
	\centerfloat
	%\begin{center} 
	\scalebox{0.69}[0.69]{
		\begin{tikzpicture}[->,>=stealth,shorten >=1pt,auto,node distance=1.5cm,thick,main node/.style={circle,draw,font=\small}]
%\tikzstyle{every node}=[font=\fontsize{5}{5}\selectfont]
				\node[main node] (e59) at (-1.55,2.64){$e(A_5,A_9)$};
				\node[main node] (e68) at (1.55,2.64){$e(A_6,A_8)$};
				\node[main node] (e57) at (-3.1,0){$e(A_5,A_7)$};
				\node[main node] (e67) at (3.1,0){$e(A_6,A_7)$};
				\node[main node] (e48) at (-1.55,-2.64){$e(A_4,A_8)$};
				\node[main node] (e49) at (1.55,-2.64){$e(A_4,A_9)$};
				\node[main node] (a6) at (-4.7,-2.64){$A_6$};
				\node[main node] (a5) at (4.7,-2.64) {$A_5$};
				\node[main node] (a4) at (0,5.4) {$A_4$};
				\node[main node] (a6p) at (-2.7,-1.5) {$\bar{A_6}$};
				\node[main node] (a5p) at (2.7,-1.5) {$\bar{A_5}$};
				\node[main node] (a4p) at (0,3) {$\bar{A_4}$};	
				\node[main node] (a7) at (0,8.3){$A_7$};
				\node[main node] (a9) at (-7.3,-4.2) {$A_9$};
				\node[main node] (a8) at (7.3,-4.2) {$A_8$};
				%\node[main node] (a1) at (5,5.3) {$A_1$};
				%\node[main node] (a2) at (7.5,3) {$A_3$};
				%\node[main node] (a3) at (7.5,5.3) {$A_2$};	
				\node[main node] (a1) at (-5,2.64) {$A_1$};
				\node[main node] (a2) at (5,2.64) {$A_2$};
				\node[main node] (a3) at (0,-4.9) {$A_3$};		
				\draw[->,>=latex] (e48) to (a6);
				\draw[->,>=latex] (e49) to (a5);
				\draw[->,>=latex] (e57) to (a6);
				\draw[->,>=latex] (e67) to (a5);
				\draw[->,>=latex] (a4) to (a4p);
				\draw[->,>=latex] (a5) to (a5p);
				\draw[->,>=latex] (a6) to (a6p);
				\draw[->,>=latex] (e59) to (a4);
				\draw[->,>=latex] (e68) to (a4);
				\draw[<->,>=latex] (a8) to (a7); 
				\draw[<->,>=latex] (a8) to (a9); 
				\draw[<->,>=latex] (a9) to (a7);
				\draw[<->,>=latex] (a7) to (a4);
				\draw[<->,>=latex] (a9) to (a6);
				\draw[<->,>=latex] (a8) to (a5);	
				\draw[->,>=latex] (a4p) to[out=240,in=90] (e48);
				\draw[->,>=latex] (a4p) to[out=-60,in=90] (e49);
				\draw[->,>=latex] (a6p) to[out=0,in=210] (e67);
				\draw[->,>=latex] (a6p) to[out=60,in=210] (e68);
				\draw[->,>=latex] (a5p) to[out=120,in=-30] (e59);
				\draw[->,>=latex] (a5p) to[out=180,in=-30] (e57);
				\draw[->,>=latex] (a9) to[out=45,in=210] (e59);
				\draw[->,>=latex] (a9) to[out=15,in=210] (e49);
				\draw[->,>=latex] (a8) to[out=135,in=-30] (e68);
				\draw[->,>=latex] (a7) to[out=-105,in=90] (e57);
				\draw[->,>=latex] (a7) to[out=-75,in=90] (e67);
				\draw[->,>=latex] (a8) to[out=165,in=-30] (e48);	
		\end{tikzpicture}
		}
	%\end{center}
	\onlypaper{\vspace{-2mm}}
	\caption{Flattening $\flat(\AFdam(AS))$ of the JSBAF $\AFdam(AS)$}
	\label{fig:tandemflatfw}
	\onlypaper{\vspace{-2mm}}
\end{figure}
%%%%%%%%%%%%%%%%%%%%%%%%%%%%%%%%%%%%%%%%%%%%%%%%%%%%%%%%%%%%%%%%%%%%%%%%%%%

% Finally, given the complexity of Figure \ref{fig:tandemflatfw}, in what follows we show that no other extensions except $E1, E2$ and $E3$ are possible for the above framework.

\onlyreport{
\begin{proposition}
\label{prop:E123}
	\textbf{The preferred extensions of $\flat(\AFdam(AS))$ are precisely $E1, E2$ and $E3$.}
\end{proposition}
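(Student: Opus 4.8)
The plan is to verify directly, working in the flattened AF $\flat(\AFdam(AS))$ depicted in Figure~\ref{fig:tandemflatfw}, that $E1', E2', E3'$ are preferred extensions and that no other preferred extension exists. I would first record some structural observations about the graph: the arguments $A_1, A_2, A_3$ are unattacked (their only incoming edges in $\AFdam(AS)$ come from $A_1 \mapsto hw$ etc., which are unattacked premises), hence they lie in every complete extension; and the meta-arguments $\bar{A_4}, \bar{A_5}, \bar{A_6}$ are each attacked only by $A_4, A_5, A_6$ respectively. The heart of the graph is the mutual-attack structure on $\{A_4, A_7\}$, $\{A_5, A_8\}$, $\{A_6, A_9\}$ together with the "tandem triangle" of symmetric attacks among $A_7, A_8, A_9$ and among the pairs, plus the six $e(\cdot,\cdot)$ meta-arguments that are fed by the $\bar{A_i}$'s and the $A_j$'s and in turn attack the $A_i$'s. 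I would note that $A_7, A_8, A_9$ pairwise attack each other and also $A_7 \leftrightarrow A_4$, $A_8 \leftrightarrow A_5$, $A_9 \leftrightarrow A_6$, so at most one of $A_7, A_8, A_9$ can be in a conflict-free set, and correspondingly patterns of the $A_4, A_5, A_6$ get forced.

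Next I would carry out the case analysis on which (if any) of $A_7, A_8, A_9$ belongs to a preferred extension $E$. By symmetry it suffices to analyze one case, say $A_9 \in E$: then $A_7, A_8, A_6 \notin E$ (they are attacked by $A_9$), and to defend $A_9$ against $A_6, A_7, A_8$ the set $E$ must contain counterattackers; I would trace through which meta-arguments this forces in, showing $A_4, A_5 \in E$ (they are needed and are defensible), then $\bar{A_6} \in E$ (now that $A_6$ is excluded and $A_6$ is $\bar{A_6}$'s only attacker, completeness forces it in), then the $e$-arguments $e(A_5,A_7), e(A_4,A_8) \in E$ by the same completeness-plus-defense reasoning, while $\bar{A_4}, \bar{A_5}$ and the remaining $e$-arguments are excluded by conflict-freeness with $A_4, A_5$ or because their attackers survive. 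This reconstructs exactly $E1'$. One must also rule out the case where none of $A_7, A_8, A_9$ is in $E$: then I would argue none of them can be defended-in (each $A_i$ among $A_7,A_8,A_9$ is attacked by the other two and cannot be defended without admitting one of them), but also show that this leaves some $A_4$-or-$A_5$-or-$A_6$ undefended or leaves the extension non-maximal, i.e. it is always strictly dominated by one of $E1', E2', E3'$ — so it is not preferred.

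The main obstacle I anticipate is the bookkeeping in the forced-membership chains: each $e(A_i, A_j)$ has two attackers (one $\bar{A_k}$ and one $A_l$), and one must carefully check in each case which attacker is "killed" by the extension so that the $e$-argument is genuinely defended and genuinely forced in by completeness, and conversely that the $e$-arguments one claims are \emph{out} really are attacked by a surviving argument. A related subtlety is confirming admissibility of $E1', E2', E3'$ themselves — checking that every attacker of every member is counterattacked — which is a finite but somewhat tedious edge-by-edge verification against Figure~\ref{fig:tandemflatfw}. I would also make explicit use of the symmetry of the construction under the cyclic permutation $(A_4 A_5 A_6)(A_7 A_8 A_9)$ (with the induced permutation of the $\bar{A_i}$ and $e(\cdot,\cdot)$ nodes) to reduce three nearly identical case analyses to one, stating the symmetry once and then only writing out the $A_9 \in E$ case in full detail.
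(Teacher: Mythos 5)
Your overall strategy---a direct, symmetry-reduced case analysis on the finite flattened graph with forced-membership propagation---is exactly the approach the paper takes, and your main case ($A_9 \in E$ forces $E = E1'$) reconstructs the paper's Case~1. Two points in your plan are genuine soft spots, though. First, you propose to certify that $E1', E2', E3'$ are preferred by ``confirming admissibility \dots\ checking that every attacker of every member is counterattacked.'' Admissibility alone does not give preferredness; you would still need maximality, which your plan only delivers indirectly (admissible sets extend to preferred ones, and your exhaustiveness argument then pins the containing preferred extension down to $Ei'$ itself by incomparability)---but you never say this. The paper's shortcut is cleaner and worth adopting: each $Ei'$ attacks every argument outside it, hence is stable, hence preferred.

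Second, your residual case (``none of $A_7, A_8, A_9$ is in $E$'') is where most of the real work lives, and your sketch underestimates it. It is not enough to say such an $E$ is ``strictly dominated by one of $E1', E2', E3'$'': that conclusion only follows after you have shown $E \subseteq \{A_1, A_2, A_3\}$, which requires separate contradiction chains ruling out membership of each of $A_4, A_5, A_6$, each of $\bar{A_4}, \bar{A_5}, \bar{A_6}$, and each $e(\cdot,\cdot)$ argument. For instance, $A_6 \in E$ with $A_7, A_8, A_9 \notin E$ forces $\bar{A_4}, \bar{A_5} \in E$ (to defend $A_6$ from $e(A_4,A_8)$ and $e(A_5,A_7)$), then $e(A_6,A_8) \in E$ (to defend $\bar{A_4}$ from $A_4$), then $A_5 \in E$ (to defend $e(A_6,A_8)$ from $A_8$), contradicting $\bar{A_5} \in E$; these are the paper's Cases~2--4 and cannot be replaced by a one-line domination remark. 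Also note that ``some $A_4$-or-$A_5$-or-$A_6$ undefended'' is not the failure mode here---the failure is a conflict forced by the defense obligations. Finally, a small ordering issue in your main case: to see that $A_4$ is defended you need $\bar{A_6} \in E$ first (it is the surviving attacker of $e(A_6,A_8)$), so the propagation should establish $\bar{A_6}$ before $A_4, A_5$, as the paper does.
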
	
			\begin{proof}
			It is easy to see that each of $E_1$, $E_2$ and $E_3$ attacks every argument outside the extension, so each of them is a stable extension and thus also a preferred extension.
			
		 		Let us suppose there exists an extension $E$ of ${Flat(AF_{tandem})}$ such that $E \neq E1$, $E \neq E2$ and $E \neq E3$. Straightaway we see that ${E \neq \emptyset}$ because a preferred extension is a subset-maximal set. Therefore, $E$ must contain atleast an element $a$.
		 		
				We have structured the rest of the proof with case-wise distinctions for the different possible values of $a$. Since every preferred extension is an admissible set, we use the notion of admissibility henceforth.
				
				\noindent \underline{Case 1}: ${a = A_9}$ or ${a = A_8}$ or ${a = A_7}$. Without loss of generality we assume that  ${a = A_9}$. So, ${A_9 \in E}$. Then ${A_7 \notin E}$ and ${A_8 \notin E}$ since $E$ is conflict-free. Following the same line of reasoning, ${A_6 \notin E}$, ${e(A_5,A_9) \notin E}$ and ${e(A_4,A_9) \notin E}$. ${\bar{A_6} \in E}$ since it is defended by $A_9$. Thus, ${e(A_6,A_8) \notin E}$ and ${e(A_6,A_7) \notin E}$. ${A_4 \in E}$ and ${A_5 \in E}$ since they are defended by $A_9$. This implies, ${\bar{A_4} \notin E}$ and ${\bar{A_5} \notin E}$ because $E$ is conflict-free. Lastly, ${e(A_5,A_7) \in E}$ and ${e(A_4,A_8) \in E}$ as they are both defended by $E$. We arrive at $E = E1$.\\ Similarly, assuming ${a = A_8}$ leads us to $E = E_2$ and ${a = A_7}$ to $E = E_3$. \textit{Contradiction}.
				
				\noindent \underline{Case 2}: Case 1 does not hold and either ${a = A_6}$ or ${a = A_5}$ or ${a = A_4}$. Without loss of generality, let us assume ${a = A_6}$ and so ${A_6 \in E}$. Then ${\bar{A_6} \notin E}$, ${e(A_5,A_7) \notin E}$, ${e(A_4,A_8) \notin E}$ since $E$ is conflict-free. This also implies ${\bar{A_5} \in E}$ as it is needed to defend $A_6$ from the attack of ${e(A_5,A_7)}$ and ${\bar{A_4} \in E}$ as it is needed to defend $A_6$ from ${e(A_4,A_8)}$. Now since ${\bar{A_5} \in E}$, so, ${e(A_5,A_9) \notin E}$. Then, ${e(A_6,A_8) \in E}$ since it is needed to defend $\bar{A_4}$ from the attack of $A_4$. This implies ${A_5 \in E}$ as it defends $e(A_6,A_8)$ from $A_8$. Since $E$ is conflict-free so ${\bar{A_5} \notin E}$. \textit{Contradiction}. A similar kind of reasoning can be applied when ${a = A_5}$ or ${a = A_4}$.
				
				\noindent \underline{Case 3}: Case 1 and 2 do not hold and either ${a = \bar{A_6}}$ or ${a = \bar{A_4}}$ or ${a = \bar{A_5}}$. Without loss of generality, let us assume ${a = \bar{A_6}}$ and so ${\bar{A_6} \in E}$. Then, ${e(A_5,A_7) \in E}$ or ${e(A_4,A_8) \in E}$ in order to defend $\bar{A_6}$ from $A_6$. Without loss of generality let us assume ${e(A_5,A_7) \in E}$. So $A_5 \in E$ since it is needed to defend ${e(A_5,A_7)}$ from ${\bar{A_5}}$. However, in this case we have assumed that Case 2 does not hold. \textit{Contradiction}. A similar kind of reasoning can be applied when ${a = \bar{A_4}}$ or ${a = \bar{A_5}}$.
% And,  ${A_4 \in E}$ since it is needed to defend ${e(A_5,A_7)}$ from ${A_7}$. In that case, ${e(A_5,A_9) \notin E}$ since $E$ is conflict-free. In order to defend $A_4$ from ${e(A_5,A_9)}$, ${\bar{A_5} \in E}$. \textit{Contradiction}. \\ A similar kind of reasoning can be applied when ${a = \bar{A_4}}$ or ${a = \bar{A_5}}$.\\\\

				\noindent \underline{Case 4}: Cases 1 and 2 and 3 do not hold. Then $a$ must be an argument of the form $e(b,c)$. Without loss of generality, let us assume ${a = e(A_5,A_7)}$ and so, ${e(A_5,A_7) \in E}$. This means that ${A_5 \in E}$ since it is needed to defend ${e(A_5,A_7)}$ from the attack of ${\bar{A_5}}$. \textit{Contradiction}.
				
  				In each of the four cases we have derived a contradiction. So our original assumption must be false. In other words, $E$ must be equal to either $E1$ or $E2$ or $E3$.		
			\end{proof}
}

\section{Conclusion and Future Work}
\label{sec:conclusion}

Caminada~\cite{caminada2017rationality} has established that ASPIC+, which uses restricted rebuttal, satisfies the three rationality postulates defined in Section~2 under any of the standard admissibility- based semantics, whereas ASPIC$-$, which uses unrestricted rebuttal, satisfied closure and indirect consistency only under the grounded semantics. In this paper we defined a modification of ASPIC$-$ called Deductive ASPIC$-$, which also uses unrestricted rebuttal, but which satisfies all three rationality postulates under any admissibility-based semantics. This is attained by keeping track not only of the attack relation between arguments, but also of the deductive joint support relation between arguments linked by an application of a strict rule.

% Alongside, it also brought to our notice the inadequacy of the existing Structured Argumentation Framework to accomodate unrestricted rebuts for any admissibility-based semantics except the grounded. In view of this matter, we have defined in this paper, a modification of the ASPIC+ framework which allows unrestricted rebut, something which is very intuitive among humans. We have shown that it is possible for unrestricted rebuts to claim their place in Structured Argumentation Frameworks as they do satisfy all the three Rationality Postulates.\\\\

The methodology introduced in this paper opens up multiple avenues for future research. First, the results presented in this paper have been limited to structured argumentation without preferences, so future work should study how these results could be generalized to a variant of Deductive ASPIC$-$ with preferences. 

Furthermore, while the results presented in this paper are limited to admissibility- based semantics, the general methodology is also applicable to naive-based semantics like CF2, SCF2, stage and stage2. So far, the application of these semantics to structured argumentation was limited by the fact that the closure postulate is violated under these semantics, even when restricted rebuttal is used. Given that empirical cognitive studies have found CF2 and SCF2 to be good models of human argument evaluation (see \cite{cramer2018empirical,cramer2019empirical,cramer2019scf2}), it seems to us to be a very worthwhile endeavor to attempt to remedy this situation. However, the approach of using the $sup$ operator, i.e.\ to use JSBAF semantics like $sup(\textnormal{CF2})$ or $sup(\textnormal{SCF2})$, will not yield to satisfaction of the closure postulate. Instead, one can adapt these naive-based argumentation semantics to JSBAF semantics in a different way. For example, a JSBAF variant of CF2 could be defined by ensuring the deductiveness property on the level of each SCC. %, instead of just applying naive semantics there. 
Additionally, the definition of SCC would have to be adapted to account for the effect of deductive joint support (the paths required in the definition of SCCs should be able to pass thorough the support relation as well, however in the backward direction). This way the closure postulate can be made to be satisfied in combination with these naive-based semantics.

Another avenue for future research is to apply the methodology introduced in this paper to tackle the rationality postulates of non-interference and crash resistance (see \cite{caminada2017rationality}). Wu and Podlaszewski~\cite{wu2015implementing} have introduced an approach to satisfying these postulates by deleting inconsistent arguments, but when preferences are taken into account, this approach fails to satisfy closure. Combining their approach with ours yields a framework in which closure as well as non-interference and crash resistance can be satisfied in the presence of preferences.
\onlypaper{\vspace{-3.48mm}}

\bibliography{bibliography}
\bibliographystyle{plain}

% \end{thebibliography}
\end{document}